\DeclareMathOperator*{\argmin}{arg\,min}
\newcommand{\II}[1]{\mathds{1}_{\left\{#1\right\}}}
\newcounter{relctr} 
\everydisplay\expandafter{\the\everydisplay\setcounter{relctr}{0}} 
\newcommand\labelrel[2]{%
  \begingroup
    \refstepcounter{relctr}%
    \stackrel{\textnormal{(\alph{relctr})}}{\mathstrut{#1}}%
    \originallabel{#2}%
  \endgroup
}
\newcommand{\Bin}{\mathrm{Bin}}
\newcommand{\Ber}{\mathrm{Ber}}
\newtheorem{assumption}{Assumption}
\newenvironment{customasp}[1]
  {\innercustomasp}
  {\endinnercustomasp}
\newtheorem{theorem}{Theorem}
\newcommand{\R}{\mathbb{R}}
\newcommand{\1}{\mathds{1}}
\newcommand{\E}{\mathbb{E}}
\newcommand{\EE}[1]{\mathbb{E}\left[#1\right]}
\newcommand{\EEs}[2]{\mathbb{E}_{#1}\left[#2\right]}
\newcommand{\EEc}[2]{\mathbb{E}\left[#1\left|#2\right.\right]}
\newcommand{\PP}[1]{\mathbb{P}\left[#1\right]}
\newcommand{\abs}[1]{\left|#1\right|}
\newcommand{\cA}{\mathcal{A}}
\newcommand{\cH}{\mathcal{H}}
\newcommand{\cL}{\mathcal{L}}
\renewcommand{\epsilon}{\varepsilon}
\renewcommand{\hat}{\widehat}
\renewcommand{\tilde}{\widetilde}
\newcommand{\nothere}[1]{}
\newcommand{\loss}[1]{\ell^{(#1)}}
\newcommand{\tloss}[1]{\tilde{\ell}^{(#1)}}
\newcommand{\Loss}[1]{L^{(#1)}}
\newcommand{\tLoss}[1]{\tilde{L}^{(#1)}}
\newcommand{\hy}{\hat{y}}
\newcommand{\reg}[1]{\mathrm{Reg}^{(#1)}}
\newcommand{\sreg}[1]{\mathrm{SleepReg}^{(#1)}}
\newcommand{\SL}{\mathrm{SL}}
\newcommand{\sd}{\mathrm{SD}}
\newcommand{\fll}{\mathrm{FLL}}
\newcommand{\ts}{I_{h^*}}
\title{Online Learning with Primary and Secondary Losses}
\author{%
  Avrim Blum\\
  Toyota Technological Institute at Chicago\\
  \texttt{avrim@ttic.edu} \\
  \And
  Han Shao\\
  Toyota Technological Institute at Chicago\\
  \texttt{han@ttic.edu} \\
}
\begin{document}


\maketitle

\begin{abstract}
We study the problem of online learning with primary and secondary losses. For example, a recruiter making decisions of which job applicants to hire might weigh false positives and false negatives equally (the primary loss) but the applicants might weigh false negatives much higher (the secondary loss). We consider the following question: Can we combine ``expert advice'' to achieve low regret with respect to the primary loss, while at the same time performing {\em not much worse than the worst expert} with respect to the secondary loss? Unfortunately, we show that this goal is unachievable without any bounded variance assumption on the secondary loss. More generally, we consider the goal of minimizing the regret with respect to the primary loss and bounding the secondary loss by a linear threshold. On the positive side, we show that running any switching-limited algorithm can achieve this goal if all experts satisfy the assumption that the secondary loss does not exceed the linear threshold by $o(T)$ for any time interval. If not all experts satisfy this assumption, our algorithms can achieve this goal given access to some external oracles which determine when to deactivate and reactivate experts.

\end{abstract}
\vspace{-0.1in}
\section{Introduction}
\vspace{-0.05in}
The online learning problem has been studied extensively in the literature and used increasingly in many applications including hiring, advertising and recommender systems. One classical problem in online learning is prediction with expert advice, in which a decision maker makes a sequence of $T$ decisions with access to $K$ strategies (also called ``experts''). At each time step, the decision maker observes a scalar-valued loss of each expert. The standard objective is to perform as well as the best expert in hindsight. For example, a recruiter (the decision maker) sequentially decides which job applicants to hire with the objective of minimizing errors (of hiring an unqualified applicant and rejecting a qualified one). However, this may give rise to some social concerns since the decision receiver has a different objective (getting a job) which does not receive any attention. This problem can be modeled as an online learning problem with the primary loss (for the decision maker) and secondary loss (for the decision receiver). Taking the social impact into consideration, we ask the following question:

\vspace{0.03in}
{\centerline{\em Can we achieve low regret with respect to the primary loss, while performing}}

{\centerline{\em
not much worse than the worst expert with respect to the secondary loss?}}
\vspace{-0.05in}
Unfortunately, we answer this question negatively. More generally, we consider a bicriteria goal of minimizing the regret to the best expert with respect to the primary loss while minimizing the regret to a linear threshold $cT$ with respect to the secondary loss for some $c$. When the value of $c$ is set to the average secondary loss of the worst expert with respect to the secondary loss, the objective reduces to no-regret for the primary loss while performing no worse than the worst expert with respect to the secondary loss. Other examples, e.g., the average secondary loss of the worst expert with respect to the secondary loss among the experts with optimal primary loss, lead to different criteria of the secondary loss. Therefore, with the notion of regret to the linear threshold, we are able to study a more general goal. Based on this goal, we pose the following two questions: 
\begin{enumerate}
\vspace{-0.1in}
\item If all experts have secondary losses no greater than $cT+o(T)$ for some $c$, can we achieve
no-regret (compete comparably to the best expert) for the primary loss while achieving secondary loss no worse than $cT+o(T)$?\label{q1}
\item If we are given some external oracles to deactivate some ``bad'' experts with unsatisfactory secondary loss, can we perform as well as each expert with respect to the primary loss during the time they are active while achieving secondary loss no worse than $cT+o(T)$?\label{q2}
\end{enumerate}
\vspace{-0.1in}
These two questions are trivial in the i.i.d. setting as we can learn the best expert with respect to the primary loss within $O(\log(T))$ rounds and then we just need to follow the best expert. In this paper, we focus on answering these two questions in the adversarial online setting.

\vspace{-0.1in}
\subsection{Contributions}
\vspace{-0.05in}
\paragraph{An impossibility result without a bounded variance assumption}
We show that without any constraints on the variance of the secondary loss, even if all experts have secondary loss no greater than $cT$, achieving no-regret with respect to the primary loss and bounding secondary loss by $cT+O(T)$ is still unachievable. This answers our motivation question that it is impossible to achieve low regret with respect to the primary loss, while performing not much worse than the worst expert with respect to the secondary loss. This result explains why minimizing one loss while bounding another
is non-trivial and applying existing algorithms for scalar-valued losses after scalarizing primary and secondary losses does not work. We propose an assumption on experts that the secondary loss of the expert during any time interval does not exceed $cT$ by $O(T^\alpha)$ for some $\alpha\in [0,1)$. 

Then we study the problem in two scenarios, a ``good'' one in which all experts satisfy this assumption and a ``bad'' one in which experts partially satisfy this assumption and we are given access to an external oracle to deactivate and reactivate experts.
\vspace{-0.05in}
\paragraph{Our results in the ``good'' scenario}
In the ``good'' scenario, we show that running an algorithm with limited switching rounds such as Follow the Lazy Leader~\citep{kalai2005efficient} and Shrinking Dartboard (SD)~\citep{geulen2010regret} can achieve both regret to the best with respect to the primary loss and regret to $cT$ with respect to the secondary loss at $O(T^{\frac{1+\alpha}{2}})$.  We also provide a lower bound of $\Omega(T^\alpha)$.

From another perspective, we relax the ``good'' scenario constraint by introducing adaptiveness to the secondary loss and constraining the variance of the secondary loss between any two switchings for any algorithm instead of that of any expert. We show that in this weaker version of ``good'' scenario, the upper bound of running switching-limited algorithms matches the lower bound at $\Theta(T^{\frac{1+\alpha}{2}})$.

\vspace{-0.05in}
\paragraph{Our results in the ``bad'' scenario}
In the ``bad'' scenario, we assume that we are given an external oracle to determine which experts to deactivate as they do not satisfy the bounded variance assumption. We study two oracles here. One oracle deactivates the experts which do not satisfy the bounded variance assumption once detecting and never reactivates them. The other one reactivates those inactive experts at fixed rounds. In this framework, we are limited to select among the active experts at each round and we adopt a more general metric, sleeping regret, to measure the performance of the primary loss. We provide algorithms for the two oracles with theoretical guarantees on the sleeping regrets with respect to the primary loss and the regret to $cT$ with respect to the secondary loss. 

\vspace{-0.1in}
\subsection{Related work}
\vspace{-0.05in}
One line of closely related work is online learning with multi-objective criterion. A bicriteria setting which examines not only the regret to the best expert but also the regret to a fixed mixture of all experts is investigated by~\cite{even2008regret,kapralov2011prediction,sani2014exploiting}. The objective by~\cite{even2009online} is to learn an optimal static allocation over experts with respect to a global cost function. Another multi-objective criterion called the Pareto regret frontier studied by~\cite{koolen2013pareto} examines the regret to each expert. Different from our work, all these criteria are studied in the setting of scalar-valued losses. The problem of multiple loss functions is studied by~\cite{chernov2009prediction} under a heavy geometric restriction on loss functions. For vector losses, one fundamental concept is the Pareto front, the set of feasible points in which none can be dominated by any other point given several criteria to be optimized~\citep{hwang2012multiple,auer2016pareto}. However, the Pareto front contains unsatisfactory solutions such as the one minimizing the secondary loss, which implies that learning the Pareto front can not achieve our goal. Another classical concept is approachability, in which a learner aims at making the averaged vector loss converge to a pre-specified target set~\citep{blackwell1956analog,abernethy2011blackwell}. However, we show that our fair solution is unapproachable without additional bounded variance assumptions. Approachability to an expansion target set based on the losses in hindsight is studied by~\cite{mannor2014approachability}. However, the expansion target set is not guaranteed to be meet our criteria. Multi-objective criterion has also been studied in multi-armed bandits~\citep{turgay2018multi}.

\vspace{-0.1in}
\section{Model}\label{sec:model}
\vspace{-0.05in}
We consider the adversarial online learning setting with a set of $K$ experts $\cH = \{1,\ldots,K\}$. At round $t=1,2,\ldots,T$, given an active expert set $\cH_t\subseteq \cH$, an online learner $\cA$ computes a probability distribution $p_t\in \Delta_K$ over $\cH$ with support only over $\cH_t$ and selects one expert from $p_t$. Simultaneously an adversary selects two loss vectors $\loss{1}_t,\loss{2}_t\in [0,1]^K$, where $\loss{1}_{t,h}$ and $\loss{2}_{t,h}$ are the primary and secondary losses of expert $h\in \cH$ at time $t$. Then $\cA$ observes the loss vector and incurs expected losses $\loss{i}_{t,\cA}=p_t^\top \loss{i}_t$ for $i\in \{1,2\}$. Let $\Loss{i}_{T,h} = \sum_{t=1}^T\loss{i}_{t,h}$ denote the loss of expert $h$ and $\Loss{i}_{T,\cA}=\sum_{t=1}^Tp_t^\top \loss{i}_t$ denote the loss of algorithm $\cA$ for $i\in\{1,2\}$ during the first $T$ rounds. 
We will begin by focusing on the case that the active expert set $\cH_t =\cH$.

\vspace{-0.1in}
\subsection{Regret notions}
\vspace{-0.05in}
Traditionally, the regret (to the best) is used to measure the scalar-valued loss performance of a learner, which compares the loss of the learner and the best expert in hindsight. Similar to~\cite{even2008regret}, we adopt the regret notion of $\cA$ with respect to the primary loss as
\begin{align*}
\reg{1} \triangleq  \max\left({\Loss{1}_{T,\cA}- \min_{h\in \cH} \Loss{1}_{T,h}},1\right)\:.
\end{align*} 

We introduce another metric for the secondary loss called {\em regret to $cT$} for some $c\in[0,1]$, which compares the secondary loss of the learner with a linear term $cT$, 
\begin{align*}
\reg{2}_c \triangleq \max\left(\Loss{2}_{T,\cA}- cT,1\right)\:.
\end{align*}

Sleeping experts are developed to model the problem in which not all experts are available at all times~\citep{blum1997empirical,freund1997using}. At each round, each expert $h\in \cH$ decides to be active or not and then a learner can only select among the active experts, i.e. have non-zero probability $p_{t,h}$ over the active experts. The goal is to perform as well as $h^*$ in the rounds where $h^*$ is active for all $h^*\in \cH$. We denote by $\cH_t$ the set of active experts at round $t$. The sleeping regret for the primary loss with respect to expert $h^*$ is defined as

\begin{align*}
\sreg{1}(h^*) \triangleq \max\left(\sum_{t:h^*\in\cH_t}\sum_{h\in \cH_t}p_{t,h}\loss{1}_{t,h} - \sum_{t:h^*\in\cH_t} \loss{1}_{t,h^*},1\right)\:.
\end{align*}
The sleeping regret notion we adopt here is different from the regret to the best ordering of experts in the sleeping expert setting of~\cite{kleinberg2010regret}. Since achieving the optimal regret bound in Kleinberg's setting is computationally hard~\citep{kanade2014learning}, we focus on the sleeping regret notion defined above.

\vspace{-0.1in}
\subsection{Assumptions}
\vspace{-0.05in}
Following a standard terminology, we call an adversary oblivious if her selection is independent of the learner's actions. Otherwise, we call the adversary adaptive. First, we assume that the primary loss is oblivious. This is a common assumption in the online learning literature and this assumption holds throughout the paper.

\begin{assumption}\label{asp:loss1}
The primary losses $\{\loss{1}_t\}_{t\in[T]}$ are oblivious.
\end{assumption}
For an expert $h\in\cH$, we propose a bounded variance assumption on her secondary loss: the average secondary loss for any interval does not exceed $c$ much. More formally, the assumption is described as below.
\begin{assumption}\label{asp:intv}
For some given $c, \delta,\alpha \in [0,1]$ and for all expert $h\in \cH$, for any $T_1,T_2\in [T]$ with $T_1\leq T_2$, 
\begin{align*}
\sum_{t=T_1}^{T_2}(\loss{2}_{t,h}-c)\leq \delta T^\alpha\:.
\end{align*}
\end{assumption}
We show that such a bounded variance assumption is necessary in Section~\ref{sec:neg}.
We call a scenario ``good'' if all experts satisfy assumption~\ref{asp:intv}. Otherwise, we call the scenario ``bad''. This ``good'' constraint can be relaxed by introducing adaptiveness to the secondary loss. We have a relaxed version of the ``good'' scenario in which the average secondary loss between any two switchings does not exceed $c$ much for any algorithm. More formally,

\begin{customasp}{2$'$}\label{asp:intv2}
For some given $c, \delta,\alpha \in [0,1]$, for any algorithm $\cA$, let $\cA_t\in \cH$ denote the selected expert at round $t$. For any expert $h\in \cH$ and $T_1\in [T]$ such that $\cA_{T_1}=h$ and $\cA_{T_1-1}\neq h$ (where $\cA_{T+1} =  T \cA_{0}=0$ for notation simplicity), we have
\begin{align*}
\sum_{\tau =T_1}^{\min_{t >T_1: \cA_t \neq h}t-1}\left(\loss{2}_{\tau,h}-c\right)\leq \delta T^\alpha\:.
\end{align*}
\end{customasp}
In the ``good'' scenario, the active expert set $\cH_t =\cH$ for all rounds and the goal is minimizing both $\reg{1}$ and $\reg{2}_c$. In the ``bad'' scenario, we consider that we are given an oracle which determines $\cH_t$ at each round and the goal is minimizing $\sreg{1}(h^*)$ for all $h^*\in \cH$ and $\reg{2}_c$.

\vspace{-0.1in}
\section{Impossibility result without any bounded variance assumption}\label{sec:neg}
\vspace{-0.05in}
In this section, we show that without any additional assumption on the secondary loss, even if all experts have secondary loss no greater than $cT$ for some $c\in [0,1]$, there exists an adversary such that any algorithm incurs $\E[{\max(\reg{1},\reg{2}_c)}]=\Omega(T)$.
\begin{theorem}\label{thm:notwork}
Given a fixed expert set $\cH$, there exists an adversary such that any algorithm will incur $\E[{\max(\reg{1}, \reg{2}_c)}]=\Omega(T)$ with $c = \max_{h\in \cH} \Loss{2}_{T,h}/T$, where the expectation is taken over the randomness of the adversary.
\end{theorem}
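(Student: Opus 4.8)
The plan is to exhibit a single randomized adversary over two experts $A,B$, driven by one fair coin $W\in\{A,B\}$ that determines \emph{both} loss coordinates (so the adversary is in fact oblivious, which is stronger than Assumption~\ref{asp:loss1} requires), and to argue that every learner must, on a constant fraction of the coin's mass, pay $\Omega(T)$ in either $\reg{1}$ or $\reg{2}_c$. The guiding intuition is that low primary regret forces the learner to ``insure'' against one contingency by committing weight to an expert early, and this forced investment carries an unrecoverable secondary cost that resurfaces as $\reg{2}_c=\Omega(T)$ in the complementary contingency.

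Concretely, I would split $[T]$ into phase~$1=\{1,\dots,m\}$ with $m=T/3$ and phase~$2=\{m+1,\dots,T\}$. In phase~$1$, identically for both values of $W$, set $\loss{1}_{t,A}=0,\ \loss{1}_{t,B}=1$, so $A$ is the apparent leader and the two scenarios are indistinguishable to the learner before round $m+1$. In phase~$2$, if $W=A$ keep $\loss{1}_{t,A}=0,\ \loss{1}_{t,B}=1$ (so $A$ is optimal and $\min_h\Loss{1}_{T,h}=0$), whereas if $W=B$ flip to $\loss{1}_{t,A}=1,\ \loss{1}_{t,B}=0$ (so $B$ is optimal with $\Loss{1}_{T,B}=m$). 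For the secondary loss I charge the attractive expert: $\loss{2}_{t,A}=1,\ \loss{2}_{t,B}=0$ throughout phase~$1$, and in phase~$2$ of the scenario $W=B$ I set $\loss{2}_{t,A}=\tfrac12,\ \loss{2}_{t,B}=1$. The rate $\tfrac12$ is chosen so both experts end with the \emph{same} secondary total, $\Loss{2}_{T,A}=\Loss{2}_{T,B}=2T/3$, hence $cT=2T/3$; this balancing is the crux, as explained below.

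The core is a two-variable reduction. Let $W_1=\sum_{t\le m}p_{t,A}$ be the learner's phase~$1$ weight on $A$ (with the same induced distribution in both scenarios, since the observed history is identical), and let $W_2=\sum_{t>m}p_{t,A}$ be its phase~$2$ weight on $A$ in the scenario $W=B$. A direct computation expresses the three relevant penalties as linear functions of $W_1,W_2$: in scenario $W=A$ the primary regret is at least $m-W_1$; in scenario $W=B$ the primary regret equals $W_2-W_1$, and, using $cT=2T/3$ and $T-m=2m$, the secondary regret equals $W_1-\tfrac12 W_2$. I would then show these cannot all be $o(T)$: requiring $m-W_1$, $W_2-W_1$ and $W_1-\tfrac12 W_2$ each below $\epsilon T$ forces (combining them with $W_1\le m$) the inequality $\tfrac{T}{3}\le 5\epsilon T$, i.e.\ $\epsilon\ge\tfrac1{15}$, so for any learner at least one penalty is $\Omega(T)$ on its scenario. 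Averaging over the fair coin gives $\E[\max(\reg{1},\reg{2}_c)]\ge\tfrac12\cdot\Omega(T)=\Omega(T)$.

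The step I expect to be the genuine obstacle — and the reason a naive construction fails — is getting the secondary charging \emph{rate} right so that the learner cannot undo its forced phase~$1$ commitment. If one charged $A$ at rate $0$ in phase~$2$ instead of $\tfrac12$, the learner could spend its phase~$2$ primary slack (of size $m$, exactly the benchmark gap) by shifting weight back onto the now-uncharged expert $A$, cancelling the phase~$1$ secondary charge and escaping with both regrets $o(T)$. Charging $A$ at rate $\tfrac12$ blocks this: reallocating phase~$2$ weight onto $A$ still accrues secondary loss \emph{and} leaves $A$ pinned at the maximum $cT$, so no single allocation kills the primary and the secondary regret at once. Making this rigorous reduces to verifying that the small linear program above has value $\Theta(T)$ and that the $W=A$ contingency truly forces the phase~$1$ distribution toward $e_A$ (otherwise $m-W_1$ is already linear); the remaining manipulations are routine.
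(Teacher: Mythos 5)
Your construction is correct and follows essentially the same strategy as the paper's proof: two worlds that coincide in phase~1 so that low primary regret forces a commitment to one expert whose accumulated secondary loss cannot be undone after the flip, followed by an infeasibility argument on the learner's expected phase weights (your $(W_1,W_2)$ linear program is exactly the paper's case analysis on $T_1,T_2$). The only cosmetic gap is that the phase-2 secondary losses in scenario $W=A$ are left unspecified --- harmless since only $\reg{1}$ is used there, but they should be fixed so that $c=\max_{h\in\cH}\Loss{2}_{T,h}/T$ is well-defined in every realization.
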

\vspace{-0.1in}
\begin{proof}
To prove this theorem, we construct a binary classification example as below. 

In a binary classification problem, for each sample with true label $y\in\{+,-\}$ and prediction $\hy\in\{+,-\}$, the primary loss is defined as the expected $0/1$ loss for incorrect prediction, i.e., $\EEs{y,\hy}{\II{\hy\neq y}}$ and the secondary loss is defined as the expected $0/1$ loss for false negatives, i.e.,  $\EEs{y,\hy}{\II{\hy\neq y,y=+}}$. We denote by $h(b)$ the expert predicting $-$ with probability $b$ and $+$ otherwise. Then every expert can be represented by a sequence of values of $b$. At round $t$, the true label is negative with probability $a$. We divide $T$ into two phases evenly, $\{1,\ldots,T/2\}$ and $T/2+1,\ldots,T$, in each of which the adversary generates outcomes with different values of $a$ and two experts $\cH=\{h_1,h_2\}$ have different values of $b$ in different phases. We construct two worlds with different values of $a$ and $b$ in phase $2$ and any algorithm should have the same behavior in phase $1$ of both worlds. The adversary randomly chooses one world with equal probability. The specific values of $a$ and $b$ are given in Table~\ref{tab:bin}. Let $c=1/16$.

\vspace{-0.1in}
\begin{table}[H]
\caption{The values of $a$ and $b$ in different phases for the binary classification example.}\label{tab:bin}
\centering
\begin{tabular}{c|c|c|c}
\toprule
experts\textbackslash phase&$1: a = \frac{5}{8}$& $2: a = \frac{3}{4}$ (world I)&$2: a = \frac{5}{8}$ (world II)\\
\cmidrule{1-4}
$h_1$ & $b =\frac{1}{6}$ &$b = 0$& $b =\frac{1}{6}$\\
\cmidrule{1-4}
$h_2$ & $b =0$&$b = \frac{1}{2}$ & $b =0$\\
\bottomrule
\end{tabular}
\end{table}
\vspace{-0.1in}
The loss of expert $h(b)$ is
${\loss{1}_{t,h(b)}}=(1-a)b+a(1-b)$ and ${\loss{2}_{t,h(b)}}=(1-a)b$.
In phase $1$ and phase $2$ of world II, ${\loss{1}_{t,h_1}}= 7/12$, ${\loss{2}_{t,h_1}}=1/16$, ${\loss{1}_{t,h_2}}={5}/{8}$ and ${\loss{2}_{t,h_2}}=0$. In phase $2$ of world I, ${\loss{1}_{t,h_1}}=3/4$, ${\loss{2}_{t,h_1}}=0$, ${\loss{1}_{t,h_2}}={1}/{2}$ and ${\loss{2}_{t,h_2}}=1/8$. For any $h\in \cH$, we have $\Loss{2}_{T,h}\leq T/16$.

For any algorithm which selects $h_1$ for $T_1$ (in expectation) rounds in phase $1$ and $T_2$ (in expectation) rounds in phase $2$ of world I. If $T_1\leq T/4$, then ${\reg{1}} \geq (T/2-T_1)/24\geq T/96$ in world II; else if $T_1>T/4$ and $T_2\geq T_1/4$, then ${\reg{1}} \geq T_2/4-T_1/24\geq T/192$ in world I; else ${\reg{2}_c}= T_1/16+(T/2-T_2)/8-T/16 = (T_1-2T_2)/16\geq T/128$ in world I. In any case, we have $\E[{\max(\reg{1}, \reg{2}_c)}]=\Omega(T)$.  
\end{proof}
\vspace{-0.1in}
The proof of Theorem~\ref{thm:notwork} implies that an expert with total secondary loss no greater than $cT$ but high secondary loss at the beginning will consume a lot of budget for secondary loss, which makes switching to other experts with low primary loss later costly in terms of secondary loss. The theorem answers our first question negatively, i.e., we are unable to achieve no-regret for primary loss while performing as well as the worst expert with respect to the secondary loss. 
\vspace{-0.1in}
\section{Results in the ``good'' scenario}\label{sec:good}
\vspace{-0.05in}
In this section, we consider the problem of minimizing $\max(\reg{1}, \reg{2}_c)$ with Assumption~\ref{asp:intv} or~\ref{asp:intv2}. We first provide lower bounds of $\Omega(T^\alpha)$ under Assumption~\ref{asp:intv} and of $\Omega(T^\frac{1+\alpha}{2})$ under Assumption~\ref{asp:intv2}. Then we show that applying any switching-limited algorithms such as Shrinking Dartboard (SD)~\citep{geulen2010regret} and Follow the Lazy Leader (FLL)~\citep{kalai2005efficient} can achieve $\max(\reg{1}, \reg{2}_c) = O(T^\frac{1+\alpha}{2})$ under Assumption~\ref{asp:intv} or~\ref{asp:intv2}, which matches the lower bound under Assumption~\ref{asp:intv2}.
\vspace{-0.1in}
\subsection{Lower bound}
\vspace{-0.05in}
\begin{theorem}\label{thm:lb1}
If Assumption~\ref{asp:intv} holds with some given $c,\delta,\alpha$, then there exists an adversary such that any algorithm incurs $\E[{\max(\reg{1}, \reg{2}_c)}]=\Omega(T^\alpha)$.
\end{theorem}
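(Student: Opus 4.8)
The plan is to exhibit a single oblivious adversary that forces a genuine trade-off between the two regrets on a short initial block whose length is dictated by the budget $\delta T^\alpha$ permitted by Assumption~\ref{asp:intv}. Take $c<1$ and $\delta>0$ (the only regime in which the claim is non-vacuous, the hidden constant being allowed to depend on $c$ and $\delta$), set $m=\floor{\delta T^\alpha/(1-c)}$, and use two experts $\cH=\{h_1,h_2\}$. On the block $t\in[1,m]$ I would put $\loss{1}_{t,h_1}=1,\ \loss{2}_{t,h_1}=0$ and $\loss{1}_{t,h_2}=0,\ \loss{2}_{t,h_2}=1$, so that $h_1$ is primary-bad but secondary-safe while $h_2$ is primary-optimal but secondary-costly. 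On the tail $t\in[m+1,T]$ I would set $\loss{1}_{t,h}=0$ and, crucially, $\loss{2}_{t,h}=c$ for both experts. The tail is loaded to $c$ precisely so that the comparison against $cT$ in $\reg{2}_c$ remains sensitive to the excess incurred on the block; setting the tail secondary loss to $0$ instead would make $\Loss{2}_{T,\cA}$ fall far below $cT$ and collapse the argument. Since all losses are fixed in advance and lie in $[0,1]$, Assumption~\ref{asp:loss1} holds and the adversary is deterministic; the expectation over adversary randomness is therefore trivial, while the learner's own randomness is already encoded in the distributions $p_t$, so the bound I derive holds for every (possibly randomized) algorithm.

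First I would verify Assumption~\ref{asp:intv}. For $h_1$ every round contributes $\loss{2}_{t,h_1}-c\le 0$, so the interval sum is nonpositive and the bound holds trivially. For $h_2$ the only positive contributions come from block rounds (each equal to $1-c$) while tail rounds contribute $0$, so over any interval the sum is maximized by $[1,m]$ and equals $m(1-c)\le\delta T^\alpha$ by the choice of $m$; hence every interval respects the bound. It remains to record the routine checks that $m\le T$ for $T$ large and that when $m=0$ (e.g.\ $\alpha=0$) the $\max(\cdot,1)$ truncation already yields the bound.

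The core step is the trade-off argument, which needs no randomization. Writing $q_t=p_{t,h_1}$ for the mass the learner places on $h_1$ at round $t\le m$, its primary loss on the block is $Q:=\sum_{t=1}^m q_t$ and its block secondary loss is $\sum_{t=1}^m(1-q_t)=m-Q$, while on the tail its primary loss is $0$ and its secondary loss is exactly $c(T-m)$ regardless of $p_t$. Since $\min_{h}\Loss{1}_{T,h}=0$ (attained by $h_2$), we get $\reg{1}\ge Q$; and since $\Loss{2}_{T,\cA}=(m-Q)+c(T-m)$, we get $\reg{2}_c\ge \Loss{2}_{T,\cA}-cT=m(1-c)-Q$. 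Adding, $\reg{1}+\reg{2}_c\ge m(1-c)$ for every learner, whence $\max(\reg{1},\reg{2}_c)\ge\tfrac12 m(1-c)=\Omega(T^\alpha)$. I expect the only genuinely delicate point to be the design of the tail (loading it to $c$) together with the verification that both experts meet Assumption~\ref{asp:intv} on all intervals; the resulting $\Omega(T^\alpha)$ matches the claim and leaves the expected gap to the $O(T^{(1+\alpha)/2})$ upper bound.
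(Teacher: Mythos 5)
Your construction is correct and follows essentially the same route as the paper's proof: a deterministic two-expert instance with a short block of length $\Theta(T^\alpha)$ on which the primary-optimal expert is forced to pay extra secondary loss, followed by a neutral tail, yielding $\reg{1}+\reg{2}_c=\Omega(\delta T^\alpha)$ pointwise. The paper instantiates this with a symmetric binary-classification example at $c=0$, whereas you handle general $c<1$ and make the trade-off identity explicit, but the underlying idea is the same.
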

\vspace{-0.1in}
\begin{proof}
We construct a binary classification example to prove the lower bound. 

The losses and the experts $\cH = \{h_1,h_2\}$ are defined based on $h(b)$ in the same way as that in the proof of Theorem~\ref{thm:notwork}. We divide $T$ into $3$ phases, the first two of which have $T^\alpha$ rounds and the third has $T-2T^\alpha$ rounds. Each expert has different $b$s in different phases as shown in Table~\ref{tab:lbbin}. At each time $t$, the sample is negative with probability $3/4$. We set $c=0$.

Since $(\loss{1}_{t,h(0)},\loss{2}_{t,h(0)}) = (3/4,0)$ and $(\loss{1}_{t,h(1)},\loss{2}_{t,h(1)}) = (1/4,1/4)$, the cumulative loss for both experts are $(\Loss{1}_{T,h},\Loss{2}_{T,h}) = (3T/4-T^\alpha/2,T^\alpha/4)$. Any algorithm $\cA$ achieving $\Loss{1}_{T,h}\leq 3T/4-T^\alpha/4$ will incur $\reg{2}_c\geq T^\alpha/8$.
\end{proof}
\vspace{-0.1in}
\begin{table}[H]
\caption{The values of $b$ in different phases for the binary classification example.}\label{tab:lbbin}
\centering
\begin{tabular}{c|c|c|c}
\toprule
experts\textbackslash phase&$1: T^\alpha$ & $2:T^\alpha$&$3:T-2T^\alpha$\\
\cmidrule{1-4}
$h_1$ & $b =1$ &$b = 0$& $b =0$\\
\cmidrule{1-4}
$h_2$ & $b =0$ &$b = 1$& $b =0$ \\
\bottomrule
\end{tabular}
\end{table}
\vspace{-0.1in}
Combined with the classical lower bound of $\Omega(\sqrt{T})$ in online learning~\citep{cesa2006prediction}, $\E[\max(\reg{1}, \reg{2}_c)]=\Omega(\max(T^\alpha, \sqrt{T}))$. In the relaxed version of the ``good'' scenario, we have the following theorem.
\begin{restatable}{theorem}{restatelb}\label{thm:lb2}
If Assumption~\ref{asp:intv2} holds with some given $c,\delta,\alpha$, then there exists an adversary such that any algorithm incurs $\E[\max(\reg{1}, \reg{2}_c)]=\Omega(T^{\frac{1+\alpha}{2}})$.
\end{restatable}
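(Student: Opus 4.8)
The plan is to prove the lower bound via a randomized (minimax) adversary built by concatenating $N = \Theta(T^{1-\alpha})$ independent copies of a rescaled version of the gadget from the proof of Theorem~\ref{thm:notwork}, each copy (a ``block'') having length $L = \Theta(T^\alpha)$, so that $NL = T$ and the per-run budget $\delta T^\alpha$ coincides with the block length up to constants. In each block I would toss an independent fair coin $\sigma_i \in \{+1,-1\}$ selecting between two sub-worlds; as in Theorem~\ref{thm:notwork}, the two experts are made to behave identically during the first part of the block, so that the primary losses reveal nothing about $\sigma_i$ until the block's decisions are effectively locked in, while the second part reveals $\sigma_i$ and realizes a primary-versus-secondary tradeoff of scale $L$. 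The secondary losses would be set adaptively and, crucially, capped so that the accumulated excess over $c$ on any maximal run of any algorithm never exceeds $\delta T^\alpha$; this is exactly what is needed for Assumption~\ref{asp:intv2} to hold, while the primary losses are fixed in advance and hence oblivious, as required by Assumption~\ref{asp:loss1}.

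By Yao's principle it then suffices to lower-bound the expected max-regret of an arbitrary deterministic algorithm against this distribution over instances. The core is a per-block lemma stating that, conditioned on the history up to block $i$, the algorithm's locked-in behaviour in block $i$ forces a contribution of magnitude $\Theta(L)$ whose sign is dictated by the hidden coin $\sigma_i$: matching $\sigma_i$ keeps the block's primary regret near zero but injects $+\Theta(\delta L)$ into the running secondary excess $\sum_\tau(\ell^{(2)}_{\tau,\mathcal{A}_\tau}-c)$, whereas anti-matching saves the secondary budget but contributes $\Theta(L)$ to the primary loss relative to that block's better expert. Because the comparator in $\reg{1}$ is the single best \emph{fixed} expert over all $T$ rounds --- which cannot be block-optimal in more than roughly half the blocks --- the per-block advantages do not simply add: they cancel against the global comparator like a mean-zero random walk. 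I would make this precise by relating the unavoidable cost, whichever of the two regrets it is routed into, to a common signed sum $\sum_i \sigma_i Z_i$ with $|Z_i| = \Theta(L)$ indexed by the independent coins.

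The aggregation step is then a standard anti-concentration argument: the signed sum has variance $\Theta(NL^2)$, so by Khintchine's inequality (or Paley--Zygmund) its positive part has expectation $\Omega(L\sqrt{N}) = \Omega(T^{(1+\alpha)/2})$. Since $\max(\reg{1},\reg{2}_c)$ dominates whichever regret the signed sum is routed into, this yields $\E[\max(\reg{1},\reg{2}_c)] = \Omega(T^{(1+\alpha)/2})$, matching the claimed rate and, per the surrounding discussion, the upper bound.

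I expect the main obstacle to be the adaptive secondary-loss construction: making the per-block tradeoff genuinely unavoidable while keeping Assumption~\ref{asp:intv2} valid on \emph{every} realizable run. Two points are delicate. First, Assumption~\ref{asp:intv2} is phrased in terms of the realized selected expert $\mathcal{A}_t$ and that expert's loss $\ell^{(2)}_{\tau,\mathcal{A}_\tau}$, whereas the incurred secondary loss is $p_t^\top \ell^{(2)}_t$; I must ensure the per-run cap holds along the sampled trajectory for every randomized algorithm and simultaneously control the expected incurred loss, so that an algorithm cannot escape the random walk either by spreading its distribution or by switching merely to reset the budget. Second, reducing $\reg{1}$ to the same signed sum requires arguing that no hedging strategy can keep both regrets $o(T^{(1+\alpha)/2})$; this is precisely where the identical first part of each block --- which prevents the algorithm from learning $\sigma_i$ before committing --- does the real work, and verifying that it rules out every mixture of matching and anti-matching across blocks is the crux of the calculation.
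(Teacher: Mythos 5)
Your high-level mechanism --- adaptive secondary losses that charge an extra $\delta$ per round at the start of every run so that switching is expensive, combined with randomized primary losses that force switching --- is the same one the paper uses (its adversary sets $\loss{2}_{t,h}=c+\delta$ unless $h$ has been played for the last $T^\alpha$ consecutive rounds, and draws each expert's primary loss as an independent fair coin per length-$T^\alpha$ epoch). But two steps of your plan have genuine gaps. First, your per-block lemma is not established: nothing in the construction actually ``locks in'' a block's decisions before $\sigma_i$ is revealed, since the algorithm may switch the instant the second half of the block begins; the only deterrent is the secondary cost of switching, and you specify only an upper cap on that cost (excess per run at most $\delta T^\alpha$, which is what Assumption~\ref{asp:intv2} requires) but never a lower bound on what each switch must cost. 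Without a guaranteed floor of order $\delta\min(\text{run length},T^\alpha)$ per switch --- which is exactly what the paper's adaptive rule provides --- an algorithm could hop to the block-best expert essentially for free, and runs that straddle several blocks make the budget accounting in your rescaled Theorem~\ref{thm:notwork} gadget break down; you flag this yourself as ``the crux'' but do not resolve it.

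Second, the aggregation via ``a common signed sum $\sum_i\sigma_i Z_i$'' plus Khintchine is not the right object. The secondary excess is a sum of nonnegative per-switch charges with no cancellation, so it is not a mean-zero random walk; and the $\sqrt{N}$ in the primary regret comes from the comparator $\min_{h}\Loss{1}_{T,h}$ falling below $T/2$ by $\Theta(T^{(1+\alpha)/2})$ (the expectation of the minimum of $K$ binomials), not from anti-concentration of a sum indexed by the algorithm's choices. The paper replaces your routing argument with a clean dichotomy: either $\E[\Loss{1}_{T,\cA}]$ stays within $\frac{a}{2}T^{(1+\alpha)/2}\sqrt{\log K}$ of $T/2$, in which case $\reg{1}$ is large because the best fixed expert beats $T/2$ by that much; or the algorithm beats $T/2$, in which case the total switching benefit $\frac{1}{2}\sum_s\min\left(\tau_{s+1}-\tau_s,\lceil\tau_s\rceil_e+1-\tau_s\right)$ must be $\Omega(T^{(1+\alpha)/2})$, and since each switch's secondary charge $\delta\min(\tau_{s+1}-\tau_s,T^\alpha)$ dominates its benefit, $\reg{2}_c$ is large. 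You would need to prove an analogue of this dichotomy to rule out every mixture of ``matching'' and ``anti-matching''; the Paley--Zygmund step as described does not substitute for it.
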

\vspace{-0.1in}
\paragraph{Sketch of the proof}
Inspired by the proof of the lower bound by~\cite{altschuler2018online}, we construct an adversary such that any algorithm achieving $\reg{1}=O(T^{\frac{1+\alpha}{2}})$ has to switch for some number of times. For the secondary loss, the adversary sets $\loss{2}_{t,h}=c$ only if $h$ has been selected for more than $T^\alpha$ rounds consecutively until time $t-1$; otherwise $\loss{2}_{t,h}=c+\delta$. In this case, every switching will increase the secondary loss. Then we can show that either $\reg{1}$ or $\reg{2}_c$ is $\Omega(T^{\frac{1+\alpha}{2}})$. The complete proof can be found in Appendix~\ref{apd:lb2}.

\vspace{-0.1in}
\subsection{Algorithm}
\vspace{-0.05in}
Under Assumption~\ref{asp:intv} or~\ref{asp:intv2}, we are likely to suffer an extra $\delta T^\alpha$ secondary loss every time we switch from one expert to another. Inspired by this, we can upper bound $\max(\reg{1}, \reg{2}_c)$ by limiting the number of switching times. Given a switching-limited learner $\cL$ on scalar-valued losses, e.g., Shrinking Dartboard (SD)~\citep{geulen2010regret} and Follow the Lazy Leader (FLL)~\citep{kalai2005efficient}, our algorithm $\cA_{\SL}(\cL)$ is described as below. 

We divide the time horizon into $T^{1-\alpha}$ epochs evenly and within each epoch we select the same expert. Let $e_i = \{(i-1)T^\alpha +1,\ldots,i T^\alpha\}$ denote the $i$-th epoch and $\loss{1}_{e_i,h}=\sum_{t\in e_i} \loss{1}_{t,h}/T^{\alpha}$ denote the average primary loss of the $i$-th epoch. We apply $\cL$ over $\{\loss{1}_{e_i,h}\}_{h\in\cH}$ for $i=1,\ldots,T^{1-\alpha}$. Let $s_{\SL}(E)$ and $r_{\SL}(E)$ denote the expected number of switching times and the regret of running $\cL$ for $E$ rounds. Then we have the following theorem.

\begin{theorem}\label{thm:alg}
Under Assumption~\ref{asp:intv} or~\ref{asp:intv2}, given a switching-limited learner $\cL$, $\cA_{\SL}(\cL)$ achieves $\reg{1} \leq T^\alpha{r_{\SL}(T^{1-\alpha})}$ and $\reg{2}_c \leq \delta T^\alpha ({s_{\SL}(T^{1-\alpha})}+1)$. By adopting SD or FLL as the learner $\cL$, $\cA_{\SL}(\sd)$ and $\cA_{\SL}(\fll)$ achieve $\max(\reg{1}, \reg{2}_c) = O(\sqrt{\log(K)T^{{1+\alpha}}})$.
\end{theorem}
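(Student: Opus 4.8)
The plan is to reduce both regret bounds to the corresponding quantities of $\cL$ run on the epoch-level meta-problem, and then to instantiate with the known regret/switching guarantees of SD and FLL. Throughout, let $q_i\in\Delta_K$ denote the distribution that $\cL$ produces for the $i$-th epoch, and let $H_i\sim q_i$ be the expert actually held throughout $e_i$, so the round-level selection satisfies $\cA_t=H_i$ for all $t\in e_i$.

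First I would handle the primary loss. Since the same expert is held for an entire epoch and the primary losses are oblivious (Assumption~\ref{asp:loss1}), the algorithm's expected primary loss factorizes as $\Loss{1}_{T,\cA}=\sum_{i=1}^{T^{1-\alpha}}\sum_{t\in e_i}q_i^\top\loss{1}_t = T^\alpha\sum_{i=1}^{T^{1-\alpha}}q_i^\top\loss{1}_{e_i}$, using $\loss{1}_{e_i,h}=\sum_{t\in e_i}\loss{1}_{t,h}/T^\alpha$; likewise $\min_{h}\Loss{1}_{T,h}=T^\alpha\min_h\sum_i\loss{1}_{e_i,h}$. Subtracting, $\reg{1}$ equals exactly $T^\alpha$ times the regret of $\cL$ on the sequence $\{\loss{1}_{e_i,\cdot}\}_{i=1}^{T^{1-\alpha}}$ of scalar losses in $[0,1]$, which is at most $T^\alpha r_{\SL}(T^{1-\alpha})$ by the definition of $r_{\SL}$.

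Next I would bound the secondary loss by counting maximal runs of consecutive identical play. Because the played expert can only change at epoch boundaries, the realized sequence $H_1,\dots,H_{T^{1-\alpha}}$ partitions $[T]$ into $S+1$ maximal time intervals, on each of which a single expert $h$ is played, where $S=|\{i:H_i\neq H_{i+1}\}|$ is the realized number of switches. Applying Assumption~\ref{asp:intv} (or, pathwise, the relaxed Assumption~\ref{asp:intv2}) to each such interval bounds its excess secondary loss $\sum_{t}(\loss{2}_{t,h}-c)$ by $\delta T^\alpha$; summing over the $S+1$ intervals gives $\sum_{t=1}^T(\loss{2}_{t,\cA_t}-c)\leq (S+1)\delta T^\alpha$ pathwise. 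Taking expectation and using $\E[S]=s_{\SL}(T^{1-\alpha})$ yields $\reg{2}_c\leq \delta T^\alpha(s_{\SL}(T^{1-\alpha})+1)$. I would also note that the $\max(\cdot,1)$ in both definitions is harmless, since the right-hand bounds exceed $1$ for the relevant range of $T$.

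Finally, to obtain the stated $O(\sqrt{\log(K)T^{1+\alpha}})$ bound I would substitute the known guarantees of the two switching-limited learners on $E=T^{1-\alpha}$ rounds, namely that both the regret $r_{\SL}(E)$ and the expected number of switches $s_{\SL}(E)$ are $O(\sqrt{E\log K})$ for SD and FLL, into the two displayed bounds; the collapse $T^\alpha\sqrt{T^{1-\alpha}}=\sqrt{T^{1+\alpha}}$ then gives the claim (absorbing the constant $\delta$). The main obstacle I anticipate is the secondary-loss step: its efficiency hinges on applying the interval assumption to each maximal run \emph{as a whole} — a single interval that may span many epochs — rather than epoch-by-epoch, since the latter would give the useless bound $\delta T^\alpha\cdot T^{1-\alpha}=\delta T$. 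Under the adaptive Assumption~\ref{asp:intv2} one must further check that this per-run bound holds pathwise even though the adversary sets $\loss{2}_t$ in response to the algorithm's realized choices, and that $\E[S]$ coincides exactly with the meta-problem switching count $s_{\SL}(T^{1-\alpha})$. The primary-loss decomposition and the final substitution are routine by comparison.
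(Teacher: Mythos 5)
Your proposal is correct and follows essentially the same route as the paper: the primary-loss bound comes from the exact epoch-level rescaling by $T^\alpha$, and the secondary-loss bound comes from partitioning $[T]$ into the $S+1$ maximal runs between switches, applying the interval assumption to each run, and taking expectation to get $\delta T^\alpha(s_{\SL}(T^{1-\alpha})+1)$, before substituting the $O(\sqrt{E\log K})$ guarantees of SD and FLL. The cautionary remarks you add (per-run versus per-epoch application of the assumption, and the pathwise validity under Assumption~2$'$) are exactly the points the paper's terse proof glosses over, and they resolve correctly.
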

\vspace{-0.1in}
\begin{proof}
It is obvious that $\reg{1} \leq T^\alpha{r_{\SL}(T^{1-\alpha})}$. We denote by $S$ the random variable of the total number of switching times and $\tau_1,\ldots,\tau_S$ the time steps the algorithm switches. For notation simplicity, let $\tau_0 =1$ and $\tau_{S+1} = T+1$. Then $\reg{2}_c =\E_{\cA}[{\sum_{t=1}^T (\loss{2}_{t,\cA_t}-c)}]\leq \E_{\cA}[{\sum_{s=0}^S\sum_{t=\tau_{s}}^{\tau_{s+1}-1}(\loss{2}_{t,\cA_t}-c)}] \leq \E_{\cA}[{\sum_{s=0}^S\delta T^\alpha}] = \delta T^\alpha (s_{\SL}(T^{1-\alpha})+1)$. Both SD and FLL have ${s_{\SL}(T^{1-\alpha})}=O(\sqrt{\log(K)T^{1-\alpha}})$ and ${r_{\SL}(T^{1-\alpha})}=O(\sqrt{\log(K)T^{1-\alpha}})$~\citep{geulen2010regret,kalai2005efficient}, which completes the proof.
\end{proof}
\vspace{-0.1in}
$\cA_{\SL}(\sd)$ and $\cA_{\SL}(\fll)$ match the lower bound at $\Theta(T^\frac{1+\alpha}{2})$ under Assumption~\ref{asp:intv2}. But there is a gap between the upper bound $O(T^\frac{1+\alpha}{2})$ and the lower bound $\Omega(T^\alpha)$ under Assumption~\ref{asp:intv}, which is left as an open question. We investigate this question a little bit by answering negatively if the analysis of $\cA_{\SL}(\cL)$ can be improved to achieve $O(T^\alpha)$. We define a class of algorithms which depends only on the cumulative losses of the experts, i.e., there exists a function $g: \R^{2K}\mapsto \Delta^K$ such that $p_t = g(\Loss{1}_{t-1},\Loss{2}_{t-1})$. Many classical algorithms such as Exponential Weights~\citep{littlestone1989weighted} and Follow the Perturbed Leader~\citep{kalai2005efficient} are examples in this class. The following theorem show that any algorithm dependent only on the cumulative losses cannot achieve a better bound than $\Omega(T^{\frac{1+\alpha}{2}})$, which provides some intuition on designing algorithms for future work.  The detailed proof can be found in Appendix~\ref{apd:subopt}.

\begin{restatable}{theorem}{restatesubopt}\label{thm:subopt}
Under Assumption~\ref{asp:intv}, for any algorithm only dependent on the cumulative losses of the experts, $\E[{\max(\reg{1}, \reg{2}_c)}] = \Omega(T^\frac{1+\alpha}{2})$.
\end{restatable}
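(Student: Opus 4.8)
The plan is to reuse the switch-forcing primary-loss instance behind Theorem~\ref{thm:lb2} and to pair it with a \emph{new} secondary loss that respects the per-expert Assumption~\ref{asp:intv} while still charging each switch; the latter is possible exactly because a cumulative-loss-dependent learner is predictable. I would restrict to two experts and group $[T]$ into $M=T^{1-\alpha}$ epochs of length $T^\alpha$, as in $\cA_{\SL}$. For the primary loss I would import the randomized oblivious construction underlying Theorem~\ref{thm:lb2} (in the spirit of the switching lower bound of~\cite{altschuler2018online}), whose defining property is information-theoretic and hence holds for \emph{every} learner: any learner with $\reg{1}=o(T^{\frac{1+\alpha}{2}})$ must move its play distribution substantially across epochs, i.e.\ $\sum_{t}\onenorm{p_t-p_{t-1}}=\Omega(\sqrt{M})=\Omega(T^{\frac{1-\alpha}{2}})$. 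I phrase a ``switch'' as total-variation movement of $p_t$, which is legitimate because $\reg{1}$ and $\reg{2}_c$ depend on $\cA$ only through the sequence $\{p_t\}$.

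The decisive use of the cumulative-loss restriction comes in designing the secondary loss. Since $p_t=g(\Loss{1}_{t-1},\Loss{2}_{t-1})$ and the expert cumulative losses $\Loss{1}_{t-1},\Loss{2}_{t-1}$ are functions of the loss sequences alone (not of the learner's sampled actions), once a realization of the randomized primary sequence is fixed the whole trajectory $\{p_t\}$ is determined and can be computed by the adversary by forward induction on $t$ while it builds $\loss{2}$. I would use this predictability to schedule, for each expert $h$, one ``hot'' block ($\loss{2}_{t,h}=c+\delta$) of length $T^\alpha$ aligned with each interval over which the learner shifts mass onto $h$, followed later by a matching ``cool'' block ($\loss{2}_{t,h}=c-\delta$) placed while the learner's weight on $h$ is small, with $\loss{2}_{t,h}=c$ in between. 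Because hot and cool blocks occur in equal number, each has length at most $T^\alpha$, and a cool block separates consecutive hot blocks, every interval sum $\sum_{t\in I}(\loss{2}_{t,h}-c)$ is at most $\delta T^\alpha$, so Assumption~\ref{asp:intv} holds. At the same time, writing $\reg{2}_c=\sum_t\sum_h p_{t,h}(\loss{2}_{t,h}-c)=\delta\sum_t\sum_h p_{t,h}\bigl(\II{h\text{ hot at }t}-\II{h\text{ cool at }t}\bigr)$, each engagement contributes $+\Omega(\delta T^\alpha)$ from its hot block (weight high) while the paired cool block contributes only a lower-order negative term (weight low); steady tracking of a single expert cancels and costs nothing, so the secondary regret is driven precisely by shifts.

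Combining the two ingredients gives the dichotomy: either the learner makes $o(\sqrt{M})$ shifts, in which case the primary instance forces $\reg{1}=\Omega(T^{\frac{1+\alpha}{2}})$; or it makes $\Omega(\sqrt{M})$ shifts, each charged $\Omega(\delta T^\alpha)$, so that $\reg{2}_c=\Omega(\delta T^\alpha\sqrt{M})=\Omega(T^{\frac{1+\alpha}{2}})$. In both cases $\max(\reg{1},\reg{2}_c)=\Omega(T^{\frac{1+\alpha}{2}})$, and taking expectation over the adversary's randomness yields the stated bound.

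I expect the main obstacle to be the secondary-loss construction itself: making the hot/cool schedule simultaneously (i) satisfy the per-expert interval bound of Assumption~\ref{asp:intv}, (ii) charge $\Omega(\delta T^\alpha)$ to every predicted shift without the cool blocks cancelling it, and (iii) remain self-consistent despite the feedback that altering $\loss{2}$ alters $\Loss{2}_{t-1}$ and hence the future trajectory, which I would resolve by the forward induction that cumulative-dependence makes available. A secondary technical point is quantifying ``mass shift'' and showing the imported primary instance genuinely forces $\Omega(\sqrt{M})$ of it for all learners; this is where I would lean most heavily on the switching lower bound of~\cite{altschuler2018online}.
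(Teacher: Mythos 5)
Your plan is a genuinely different route from the paper's, so it is worth saying what the paper actually does before naming the problem. The paper does not go through a switching lower bound at all: it builds $T^{1-\beta}+1$ worlds with $\beta=\frac{1+\alpha}{2}$, where world $0$ has a \emph{fixed}, learner-independent alternating hot/cool secondary loss (blocks of length $T^\beta/2$ with excess $\pm\delta T^{\alpha-\beta}$, so every interval sum is at most $\delta T^{\alpha}/2$), and world $w$ randomizes and then ``compensates'' the primary losses of the first $w-1$ intervals so that the cumulative loss vectors coincide with world $0$ throughout the first half of interval $w$. Cumulative-loss dependence is used for \emph{indistinguishability} — the learner must play the same chasing fraction $q_w$ in both worlds — and the dichotomy is per interval: chase the attractive (hot) expert and pay secondary regret in world $0$, or refrain and pay primary regret in world $w$. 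You instead use cumulative-loss dependence for \emph{predictability} and tailor the secondary loss to the learner; the quantifier order permits an algorithm-dependent adversary, and your reduction from $\reg{1}=o(T^{\frac{1+\alpha}{2}})$ to $\Omega(T^{\frac{1-\alpha}{2}})$ of total-variation movement does go through for the epoch-coin primary instance.

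The gap is the step you flag and then assume: ``each engagement contributes $+\Omega(\delta T^\alpha)$ from its hot block (weight high) while the paired cool block contributes only a lower-order negative term (weight low).'' Predictability cuts both ways: the learner's rule $g$ sees $\Loss{2}_{t-1}$, so it can shed weight from any expert whose cumulative secondary loss starts climbing and evade the hot block after a few rounds; the adversary cannot unilaterally keep $p_{t,h}$ high for $T^\alpha$ consecutive rounds. Moreover, low primary regret only forces $\Omega(\sqrt{M})$ of \emph{total} movement, which a learner may realize as many tiny oscillations with no $T^\alpha$-long concentration periods, so there need not be discrete ``engagements'' to align hot blocks with; and the cool blocks needed to reset the per-interval budget must be placed where \emph{future} weight on $h$ is low, which requires anticipating a trajectory that itself depends on the not-yet-chosen secondary losses (your forward induction fixes the past, not the future). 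Repairing this essentially forces a per-engagement dichotomy — evade the hot block and lose on the primary loss, or stay and lose on the secondary loss — which is what the paper's world construction delivers cleanly; as written, your charging argument does not establish the $\Omega(\delta T^\alpha\sqrt{M})$ bound.
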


\vspace{-0.1in}
\section{Results in the ``bad'' scenario}\label{sec:bad}
\vspace{-0.05in}
In the ``bad'' scenario, some experts may have secondary losses with high variance. To compete with the best expert in the period in which it has low variance, we assume that the learner is given some fixed external oracle determining which experts to deactivate and reactivate. In this section, we consider the goal of minimizing $\sreg{1}(h^*)$ for all $h^*\in \cH$ and $\reg{2}_c$. Here we study two oracles: one deactivates the ``unsatisfactory'' expert if detecting high variance of the secondary loss and never reactivates it again; the other one deactivates the ``unsatisfactory'' expert if detecting high variance of the secondary loss and reactivates it at fixed time steps.
\vspace{-0.1in}
\subsection{The first oracle: deactivating the ``unsatisfactory'' experts}
\vspace{-0.05in}
The oracle is described as below. The active expert set is initialized to contain all experts $\cH_1 = \cH$. At time $t=1,\ldots,T$, we let $\Delta\cH_{t}=\{h\in \cH_{t}: \exists t' \leq t, \sum_{\tau = t'}^{t}(\loss{2}_{\tau,h}-c)> \delta T^\alpha\}$ denote the set of active experts which do not satisfy Assumption~\ref{asp:intv}. Then we remove these experts from the active expert set, i.e., $\cH_{t+1} = \cH_{t}\setminus \Delta\cH_{t}$. We assume that there always exist some active experts, i.e. $\cH_T\neq \emptyset$. 

One direct way is running $\cA_{\SL}(\cL)$ as a subroutine and restarting $\cA_{\SL}(\cL)$ at time $t$ if there exist experts deactivated at the end of $t-1$, i.e., $\Delta H_{t-1}\neq \emptyset$. However, restarting will lead to linear dependency on $K$ for sleeping regrets. To avoid this linear dependency, we construct pseudo primary losses for each expert such that if $h$ is active at time $t$, $\tloss{1}_{t,h}= \loss{1}_{t,h}$; otherwise, $\tloss{1}_{t,h}= 1$. The probability of selecting inactive experts degenerates due to the high pseudo losses. For those inactive experts we cannot select, we construct a mapping $f:\cH \mapsto \cH$, which maps each expert to an active expert. If $\cA_{\SL}(\cL)$ decides to select an inactive expert $h$ at time $t$, we will select $f(h)$ instead. The detailed algorithm is described in Algorithm~\ref{alg:orc1}. Although the algorithm takes $\alpha$ as an input, it is worth to mention that the algorithm only uses $\alpha$ to decide the length of each epoch. We can choose a different epoch length and derive different regret upper bounds.

\begin{algorithm}[ht] \caption{$\cA_{1}$}\label{alg:orc1}
{\begin{algorithmic}[1]
\STATE {\bfseries Input:} $T$, $\cH$, $\alpha$ and a learner $\cL$
\STATE Initialize $f(h) = h$ for all $h\in \cH$.
\STATE Start an instance $\cA_{\SL}(\cL)$.
\FOR{$t=1,\ldots,T$}
\STATE Get expert $h_t$ from $\cA_{\SL}(\cL)$. 
\STATE Select expert $f(h_t)$.
\STATE Feed $\tloss{1}_{t}$ to $\cA_{\SL}(\cL)$.
\STATE For all $h$ with $f(h)\in \Delta\cH_{t}$, set $f(h) = h_0$, where $h_0$ is any expert in $\cH_{t+1}$.
\ENDFOR
\end{algorithmic}}
\end{algorithm}

\begin{theorem}
Let $T_{h^*}$ denote the number of rounds where expert $h^*$ is active. Running Algorithm~\ref{alg:orc1} with learner $\cL$ being SD or FLL can achieve
\begin{align}
\sreg{1}(h^*) =O(\sqrt{\log(K)T_{h^*}T^\alpha})\:,\label{eq:sr1}
\end{align}
for all $h^*\in \cH$ and
\begin{align}
\reg{2}_c= O(\sqrt{\log(K)T^{1+\alpha}} + K T^\alpha)\:.\label{eq:regc1}
\end{align}
\end{theorem}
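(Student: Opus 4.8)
The plan is to prove the two bounds separately, both by reducing to the guarantees of the base switching-limited routine $\cA_{\SL}(\cL)$ run on the \emph{pseudo} primary losses $\tloss{1}$, and then exploiting the structure imposed by the oracle.

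\textbf{Sleeping regret for the primary loss.} First I would record the structural fact that, because the oracle never reactivates a deactivated expert, the set of rounds where $h^*$ is active is exactly the prefix $\{1,\ldots,T_{h^*}\}$; in particular $\tloss{1}_{t,h^*}=\loss{1}_{t,h^*}$ for all $t\le T_{h^*}$. Next I would establish the pointwise domination $\loss{1}_{t,f(h_t)}\le \tloss{1}_{t,h_t}$: when $\cA_{\SL}(\cL)$ proposes an active expert we have $f(h_t)=h_t$ and equality holds, while when it proposes an inactive expert the pseudo loss is $\tloss{1}_{t,h_t}=1$, which dominates the genuine loss of the active expert $f(h_t)$ that we actually play. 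Summing these two facts over $t\le T_{h^*}$ collapses $\sreg{1}(h^*)$ into the pseudo-loss regret of $\cA_{\SL}(\cL)$ against the fixed comparator $h^*$, restricted to the prefix $\{1,\ldots,T_{h^*}\}$.

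It then remains to bound this prefix regret by $O(\sqrt{\log(K)\,T_{h^*}T^\alpha})$. Let $k'=\floor{T_{h^*}/T^\alpha}$ be the number of complete epochs inside $h^*$'s active window. Over these $k'$ epochs $h^*$ is active throughout, so pseudo and true losses coincide, and the epoch-level regret guarantee of $\cL$ gives a contribution of at most $T^\alpha r_{\SL}(k')$; the single remaining partial epoch contributes at most $T^\alpha$ since each per-round term is bounded by $1$. Plugging $r_{\SL}(k')=O(\sqrt{\log(K)k'})$ and $k'\le T_{h^*}/T^\alpha$ yields $T^\alpha r_{\SL}(k')=O(\sqrt{\log(K)T^\alpha T_{h^*}})$, which dominates the $T^\alpha$ slack and establishes~\eqref{eq:sr1}. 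The only nonroutine point here is that I need the regret of $\cL$ over a \emph{prefix} of epochs against a fixed expert, which follows from the anytime (or best-expert) regret guarantee of SD/FLL, since regret to $h^*$ is at most regret to the best expert.

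\textbf{Regret to $cT$ for the secondary loss.} Here I would partition $\{1,\ldots,T\}$ into maximal streaks on which the \emph{played} expert $f(h_t)$ is constant, and bound separately the number of streaks and the secondary-loss excess per streak. A streak can end only when either $\cA_{\SL}(\cL)$ changes its proposed expert $h_t$ (an epoch-boundary switch of the base learner) or the currently played expert is deactivated and $f$ is redirected. The first type of event happens at most $s_{\SL}(T^{1-\alpha})$ times in expectation; the second at most $K$ times total, since each expert is deactivated at most once and only the deactivation of the currently played expert forces a switch. Hence the expected number of streaks is at most $s_{\SL}(T^{1-\alpha})+K+1$. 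For the per-streak excess, note that throughout a streak the played expert $h=f(h_t)$ stays active, so the oracle's rule guarantees that every interval ending strictly before its deactivation has excess at most $\delta T^\alpha$; the streak excess is therefore at most $\delta T^\alpha$, plus one extra unit of loss absorbed at the deactivation round, i.e. at most $\delta T^\alpha+1$. Multiplying the two bounds and substituting $s_{\SL}(T^{1-\alpha})=O(\sqrt{\log(K)T^{1-\alpha}})$ gives $\reg{2}_c \le (\delta T^\alpha+1)\big(s_{\SL}(T^{1-\alpha})+K+1\big)=O(\sqrt{\log(K)T^{1+\alpha}}+KT^\alpha)$, which is~\eqref{eq:regc1}.

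\textbf{Main obstacle.} I expect the delicate part to be the secondary-loss accounting rather than the primary one: the redirection mechanism can create switches that the base learner $\cL$ never intended, and these extra switches are exactly what produce the additive $KT^\alpha$ term. The care needed is (i) to argue that each deactivation triggers at most one switch of the played expert, so that redirections contribute only $K$ rather than something larger, and (ii) to handle the boundary round at which an expert is deactivated, where the oracle's interval bound can be momentarily violated by at most one unit of loss. The primary-loss bound's only subtlety---obtaining the $T_{h^*}$ rather than $T$ dependence---is comparatively clean once the prefix structure of active windows and the anytime regret of $\cL$ are in place.
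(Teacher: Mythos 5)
Your proposal is correct and follows essentially the same route as the paper's proof: bound the sleeping regret by passing to the pseudo losses via $\loss{1}_{t,f(h_t)}\le\tloss{1}_{t,h_t}$ and $\tloss{1}_{t,h^*}=\loss{1}_{t,h^*}$ on the prefix where $h^*$ is active, then invoke the epoch-level guarantee of Theorem~\ref{thm:alg}; and bound $\reg{2}_c$ by counting switches as (base-learner switches) plus (at most $K$ redirections, one per deactivation), each costing $\delta T^\alpha$. You in fact spell out several points the paper leaves implicit (the prefix structure of active windows under the non-reactivating oracle, the pointwise domination through $f$, the one-unit slack at a deactivation round, and the anytime/prefix regret guarantee of $\cL$ needed to get the $T_{h^*}$ rather than $T$ dependence).
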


\vspace{-0.1in}
\begin{proof}
Since $\loss{1}_{m, h}\leq \tloss{1}_{m,h}$, we have
\begin{align*}
\sreg{1}(h^*) = &\left(\sum_{t=1}^{T_{h^*}}\EEs{\cA}{\loss{1}_{t,\cA_t}} - \sum_{t=1}^{T_{h^*}} \loss{1}_{t,h^*} \right)\leq \left(\sum_{t=1}^{T_{h^*}}\EEs{\cA}{\tloss{1}_{t,\cA_t}} - \sum_{t=1}^{T_{h^*}} \tloss{1}_{t,h^*} \right) 
\\=&O(\sqrt{\log(K)T_{h^*}T^\alpha})\:,
\end{align*}
where the last step uses the results in Theorem~\ref{thm:alg}. It is quite direct to have $\reg{2}_c= O(\delta T^{\alpha}(\sqrt{\log(K)T^{1-\alpha}}+K))=O(\sqrt{\log(K)T^{1+\alpha}} + K T^\alpha)$, where the first term comes from the number of switching times for running $\cA_{\SL}$ and the second term comes from an extra switching caused by deactivating one expert.
\end{proof}
\vspace{-0.1in}

For the sleeping regret for expert $h^*$, the right hand side in Eq.~\eqref{eq:sr1} is $o(T_{h^*})$ if $T_{h^*}=\omega(T^\alpha)$, which is consistent with the impossibility result without bounded variance in Section~\ref{sec:neg}. When $\alpha \geq 1/2$, the right hand side of Eq.~\eqref{eq:regc1} is dominated by $KT^\alpha$. This linear dependency on $K$ is inevitable if we want to have $\sreg{1}_{h^*}=o(T_{h^*})$ for all $h^*\in \cH$. The proof is given in Appendix~\ref{apd:linK}.

\begin{restatable}{theorem}{restatelinK}\label{thm:linK}
Let $T_{h^*}=\omega(T^\alpha)$ for all $h^*\in\cH$. There exists an adversary such that any algorithm achieving $\sreg{1}_{h^*}=o(T_{h^*})$ for all $h^*\in \cH$ will incur $\reg{2}_c = \Omega(KT^\alpha)$ for $K=O(\log(T))$.
\end{restatable}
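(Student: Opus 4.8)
The plan is to build an explicit oblivious adversary, organized into $K$ consecutive phases, together with an activity schedule for which keeping the sleeping regret small against every expert simultaneously forces the learner to ``track'' a different expert in each phase, and tracking that expert unavoidably costs $\Theta(T^\alpha)$ of secondary loss. Summing over the $K$ phases then yields $\reg{2}_c = \Omega(KT^\alpha)$. The two design ideas that make this work are (i) isolating each forced expert to a single phase, so that its sleeping-regret constraint is \emph{local} and cannot be paid off by negative regret accrued elsewhere, and (ii) spreading that expert's secondary excess \emph{uniformly} across its phase, so that there is no cheap window in which the learner can sit on it for free.

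Concretely, partition $[T]$ into $K$ equal phases $P_1,\dots,P_K$ of length $L = T/K$. I use $K$ ``tracker'' experts $h_1,\dots,h_K$ and one persistent ``escape'' expert $h_0$. Tracker $h_i$ is active only during $P_i$, where it is the unique primary-good expert ($\loss{1}_{t,h_i}=0$) and carries secondary loss spread evenly as $\loss{2}_{t,h_i} = c + \delta T^\alpha/L$; outside $P_i$ it is inactive. The escape $h_0$ is active throughout, with primary loss $\loss{1}_{t,h_0}=1$ and no secondary cost $\loss{2}_{t,h_0}=c$. Thus in every phase the learner must split its mass between the current tracker (cheap primary, costly secondary) and the escape (costly primary, free secondary). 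By construction $T_{h_i}=L=T/K$ and $T_{h_0}=T$, and the requirement $T_{h^*}=\omega(T^\alpha)$ is met precisely because $K=O(\log T)$ (more generally $K=o(T^{1-\alpha})$) gives $L=\omega(T^\alpha)$; each per-round secondary value $\delta T^\alpha/L=\delta K T^{\alpha-1}$ lies in $[0,1-c]$ for large $T$, each tracker's total excess is exactly $\delta T^\alpha$ so the instance is compatible with Assumption~\ref{asp:intv}, and the phase-based activity schedule is admissible in the sleeping-expert model.

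The steps are then as follows. First, because $h_i$ is active only on $P_i$ with $\loss{1}_{t,h_i}=0$, the hypothesis $\sreg{1}(h_i)=o(T_{h_i})=o(L)$ reads $\sum_{t\in P_i}p_t^\top\loss{1}_t=o(L)$; since the only other active expert on $P_i$ is $h_0$ with $\loss{1}_{t,h_0}=1$, this sum equals $\sum_{t\in P_i}p_{t,h_0}$, so the learner places mass $o(L)$ on $h_0$ and hence mass $(1-o(1))L$ on $h_i$ during $P_i$. Second, uniformity converts this mass directly into cost: the secondary excess incurred on $P_i$ is $\sum_{t\in P_i}p_{t,h_i}\,(\delta T^\alpha/L)=(1-o(1))L\cdot\delta T^\alpha/L=(1-o(1))\delta T^\alpha$. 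Third, summing over the $K$ phases gives $\reg{2}_c\ge\sum_{i=1}^K(1-o(1))\delta T^\alpha=\Omega(KT^\alpha)$. I would also note that the constraint for $h_0$ is automatically satisfied (the learner tracks primary-good experts, so its cumulative primary loss is $o(T)$ while $h_0$'s is $T$, making $\sreg{1}(h_0)$ negative, hence clamped to $1$), and that such low-sleeping-regret learners genuinely exist, so the hypothesis is non-vacuous and the bound is not an artifact.

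The main obstacle — and the reason for the two design choices above — is ensuring the secondary cost is unavoidable for \emph{every} qualifying learner while keeping the statement meaningful. If I instead concentrated $h_i$'s secondary excess into a short burst of length $T^\alpha$, a learner could sit on $h_0$ during the burst and switch to $h_i$ afterward, paying only $O(T^\alpha)=o(L)$ extra sleeping regret and zero secondary cost, defeating the bound; uniform spreading removes this escape hatch. Symmetrically, if trackers were active across several phases, the large negative regret they accrue in phases where they are primary-bad could offset the positive regret of never being tracked in their own phase, so the hypothesis would no longer force any mass onto them; isolating each tracker to one phase makes its constraint purely local and prevents this cancellation. The remaining work is routine bookkeeping: checking that the $o(1)$ slacks are uniform across the $K=\Theta(\log T)$ phases so their sum retains a $(1-o(1))$ factor, and handling the $\max(\cdot,1)$ clamp in the regret definitions.
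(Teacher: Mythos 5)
Your core intuition matches the paper's: force the learner, via the sleeping-regret constraints, to put almost all of its mass on a sequence of $K$ designated experts, each of which carries a secondary excess of $\delta T^\alpha$ spread uniformly over the window in which it must be tracked. However, there is a genuine gap in how you realize the activity schedule. You treat $\cH_t$ as a free design parameter of the adversary (tracker $h_i$ active \emph{only} on phase $P_i$), but this theorem is the lower bound matching the $KT^\alpha$ term in Eq.~\eqref{eq:regc1}, where the active sets are not chosen by the adversary: they are produced by the first oracle, which starts with $\cH_1=\cH$, deactivates an expert only when its windowed secondary excess exceeds $\delta T^\alpha$, and never reactivates. An expert can therefore never be ``inactive, then active,'' so your schedule is unrealizable. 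Worse, in your instance every expert satisfies Assumption~\ref{asp:intv}, so under the first oracle all experts stay active for all of $[T]$; if you then complete the definition of $\loss{1}_{t,h_i}$ outside $P_i$ (say to $1$), the learner that always plays $h_0$ has $\sreg{1}(h_i)=T-(T-L)=L=o(T)=o(T_{h_i})$ for every $i$ and $\sreg{1}(h_0)\le 0$, while incurring zero secondary excess --- the bound collapses. The cancellation you worried about (``trackers active across several phases'') is exactly the difficulty one must confront, not design away.

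The paper's proof handles this by making the oracle itself generate the staggered schedule: all $K$ experts are active from $t=1$; expert $k$ has $(\loss{1}_{t,k},\loss{2}_{t,k})=(1,c)$ for $t\le T_{k-1}$ and $(0,\,c+\delta T^\alpha/(T_k-T_{k-1}))$ afterward, with $T_k=T^{\alpha+(k-1)(1-\alpha)/(K-1)}$, so that expert $k$'s excess reaches $\delta T^\alpha$ and the oracle deactivates it precisely at $T_k$. During $(T_{k-1},T_k]$ expert $k$ is the unique active expert with zero primary loss, and $\sreg{1}(k)=o(T_k)$ forces mass $T_k-2T_{k-1}-o(T_k)$ onto it; the geometric spacing makes $T_{k-1}$ at most a constant fraction (below $1/2$) of $T_k$, so the negative-regret offset from the early rounds costs only a constant factor. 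This is also where $K=O(\log T)$ is genuinely used --- it is what makes $T_{k-1}/T_k=T^{-(1-\alpha)/(K-1)}<1/2$ possible --- whereas in your write-up that hypothesis only enters as $L=\omega(T^\alpha)$, which is a sign the construction is answering a different (easier) question. To repair your proof you would need to replace the bespoke activity schedule with losses that induce it through the oracle, which essentially leads you back to the paper's construction.
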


\vspace{-0.1in}
\subsection{The second oracle: reactivating at fixed times}
\vspace{-0.05in}
Now we consider the oracle which deactivates the unsatisfactory experts once detecting and reactivate them at fixed times. The oracle is described as follows. At given $N+1$ fixed time steps $t_0=1,t_1,\ldots,t_{N}$ with $t_{n+1}-t_{n}=\Omega(T^\beta)$ for some $\beta>\alpha$ (where $t_{N+1} = T+1$ for notation simplicity), the active expert set $\cH_t$ is reset to $\cH$. At time $t=t_{n},\ldots,t_{n+1}-2$ for any $n=0,\ldots,N$, the experts $\Delta\cH_{t}=\{h\in \cH_{t}: \exists t' \text{ such that } t_n \leq t' \leq t, \sum_{\tau = t'}^{t}(\loss{2}_{\tau,h}-c)> \delta T^\alpha\}$ will be deactivated, i.e. $\cH_{t+1} = \cH_{t}\setminus \Delta\cH_{t}$. We assume that there always exists some satisfactory experts, i.e. $\cH_{t_{n}-1}\neq \emptyset$ for all $n=1,\ldots, N+1$.

Restarting Algorithm~\ref{alg:orc1} at $t=t_0,\ldots,t_{N}$ is one of the most direct methods. Let $T^{(n)}_{h^*}$ denote the number of rounds $h^*$ is active during $t=t_n,\ldots,t_{n+1}-1$ and $T_{h^*}=\sum_{n=0}^N T^{(n)}_{h^*}$ denote the total number of rounds $h^*$ is active. Then we have $\sreg{1}_{h^*} =O(\sum_{n=0}^{N} \sqrt{\log(K)T^{(n)}_{h^*}T^{\alpha}})= O(\sqrt{\log(K)T_{h^*}T^{\alpha}N})$ and $\reg{2}_c = O(\sum_{n=0}^{N}(  \sqrt{\log(K)T^\alpha (t_{n+1}-t_n)} + K\delta T^\alpha))=O(\sqrt{\log(K)T^{1+\alpha}N} +  NKT^\alpha)$. 

However, if all experts are active all times, then the upper bound of $\sreg{1}(h^*)$ for the algorithm of restarting is $O(\sqrt{\log(K)T^{1+\alpha}N}) = O(\sqrt{\log(K)T^{2+\alpha-\beta}})$, which is quite large. We consider a smarter algorithm with better sleeping regrets when $T_{h^*}$ is large. The algorithm combines the methods of constructing meta experts for time-selection functions by~\cite{blum2007external} to bound the sleeping regrets and inside each interval, we select experts based on SD~\citep{geulen2010regret} to bound the number of switching times. We run the algorithm in epochs with length $T^\alpha$ and within each epoch we play the same expert. For simplicity, we assume that the active expert set will be updated only at the beginning of each epoch, which can be easily generalized. Let $e_i = \{(i-1)T^\alpha +1,\ldots,i T^\alpha\}$ denote the $i$-th epoch and $E=\{e_i\}_{i\in[T^{1-\alpha}]}$ denote the set of epochs. We let $\loss{1}_{e,h}=\sum_{t\in e} \loss{1}_{t,h}/T^{\alpha}$ and $\loss{1}_{e,\cA}=\sum_{t\in e} \loss{1}_{t,\cA_t}/T^{\alpha}$ denote the average primary loss of expert $h$ and the algorithm. And we let $\cH_{e}$ and $\Delta \cH_{e}$ denote the active expert set at the beginning of epoch $e$ and the deactivated expert set at the end of epoch $e$. Then we define the time selection function for epoch $e$ as $I_{h^*}(e)=\1(h^* \text{ is active in epoch }e)$ for each $h^*\in \cH$. Then we construct $K$ meta experts for each time selection function. Similar to Algorithm~\ref{alg:orc1}, we adopt the same expert mapping function $f$ and using pseudo losses $\tloss{1}_{e,h} = \loss{1}_{e,h}$ if $h$ is active and $\tloss{1}_{e,h}  = 1$ if not. The detailed algorithm is shown as Algorithm~\ref{alg:orc2}. Then we have the following theorem, the detailed proof of which is provided in Appendix~\ref{apd:ub}.
\begin{restatable}{theorem}{restateub}\label{thm:ub}
Running Algorithm~\ref{alg:orc2} can achieve
\begin{align*}
\sreg{1}(h^*) = O(\sqrt{\log(K)T^{1+\alpha}} +T_{h^*}\sqrt{\log(K)T^{\alpha-1}})\:,
\end{align*}
for all $h^*\in\cH$ and
\begin{align*}
\reg{2}_c = O(\sqrt{\log(K)T^{1+\alpha}}+\log(K)T^{\alpha}N+NKT^{\alpha})\:.
\end{align*}
\end{restatable}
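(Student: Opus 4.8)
The plan is to prove the two guarantees separately, in both cases exploiting the epoch structure (epochs of length $T^\alpha$, so $E=T^{1-\alpha}$ epochs in total) and the fact that Shrinking Dartboard (SD), run as the master over the $K^2$ meta-experts indexed by pairs (time-selection function $I_{h^*}$, base expert $j$), reproduces at every epoch the marginal distribution of multiplicative weights while switching rarely. For the sleeping regret I would use only the first property; for the secondary loss, only the second.

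For the sleeping-regret bound, I would fix $h^*$ and consider the meta-expert that plays $h^*$ under the time-selection function $I_{h^*}(e)=\II{h^*\in\cH_e}$. Writing $\eta=\sqrt{\log(K^2)/E}$ for the horizon-tuned (fixed) learning rate of the master, the time-selection analysis of~\cite{blum2007external} gives, on the epoch-averaged pseudo-losses,
\[
\sum_{e}I_{h^*}(e)\bigl(\tloss{1}_{e,\cA}-\tloss{1}_{e,h^*}\bigr)\le \frac{\log(K^2)}{\eta}+\eta\, n_{h^*} = O\!\left(\sqrt{\log(K)\,E}+ n_{h^*}\sqrt{\log(K)/E}\right),
\]
where $n_{h^*}=\sum_e I_{h^*}(e)=T_{h^*}/T^\alpha$ is the number of active epochs of $h^*$; the two terms come precisely from using one fixed $\eta$ tuned to the full horizon while evaluating a selection function that is on for only $n_{h^*}$ epochs. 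It then remains to pass from pseudo-losses back to the true sleeping regret exactly as in the first-oracle proof: since $\loss{1}_{t,h}\le\tloss{1}_{t,h}$ always, $\tloss{1}_{e,h^*}=\loss{1}_{e,h^*}$ on epochs where $h^*$ is active, and whenever the master selects an inactive meta-expert $h$ we instead play $f(h)\in\cH_e$ whose true loss is at most $1=\tloss{1}_{e,h}$. Multiplying the epoch-averaged bound by $T^\alpha$ then yields the two stated terms $O(\sqrt{\log(K)T^{1+\alpha}})$ and $T_{h^*}\,O(\sqrt{\log(K)T^{\alpha-1}})$.

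For the secondary loss I would partition the horizon into maximal segments on which the actually played expert is constant and which lie inside a single reactivation interval, and bound $\reg{2}_c$ by (number of segments)$\times \delta T^\alpha$: within such a segment the played expert $h$ is active and was not deactivated, so the oracle's rule together with Assumption~\ref{asp:intv} guarantees $\sum_\tau(\loss{2}_{\tau,h}-c)\le\delta T^\alpha$. The number of segments is at most $1+N+S$, where $S$ is the total number of switches of the played expert, so everything reduces to bounding $S$, which I would decompose into three sources: (i) the intrinsic switches of SD over the $E$ epochs, $O(\sqrt{\log(K)E})$ by~\citep{geulen2010regret}, giving the $\sqrt{\log(K)T^{1+\alpha}}$ term after multiplying by $\delta T^\alpha$; (ii) the switches forced by the mapping $f$ when an active expert is deactivated, at most $K$ per interval over the $N+1$ intervals, giving the $NKT^\alpha$ term; and (iii) the extra switches caused at each of the $N$ reactivation boundaries, where the weights of the returning meta-experts are reset and SD may re-concentrate, which I would bound by $O(\log K)$ per boundary, giving the $\log(K)T^\alpha N$ term.

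The main obstacle is item (iii): bounding the switch count of SD across the $N$ reactivation events. The clean $O(\sqrt{\log(K)E})$ switch bound holds for a single uninterrupted run with monotonically shrinking weights, whereas each reactivation reintroduces meta-experts whose weights had decayed (their pseudo-loss was $1$ while deactivated) and must be reset, perturbing the weight-ratio telescoping that underlies SD's switch analysis. I would show that each reset raises the governing potential (the log of the total weight) by only $O(\log K)$, so the additional expected switches it can trigger are $O(\log K)$ per reactivation; since the primary losses are oblivious (Assumption~\ref{asp:loss1}), the SD marginals are well defined and this potential argument is valid, yielding the additive $N\log K$ contribution to $S$. The remaining steps—passing from pseudo-losses to true losses and charging $\delta T^\alpha$ to each segment—are routine given the first-oracle analysis and Assumption~\ref{asp:intv}.
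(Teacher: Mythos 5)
Your overall architecture matches the paper's: the sleeping-regret half (Blum--Mansour time-selection weights $w_{m,h}^{h^*}$ on epoch-averaged pseudo-losses, a single horizon-tuned rate, then passing back to true losses via $\loss{1}_{t,h}\le\tloss{1}_{t,h}$ and the remapping $f$) is exactly the paper's argument and your arithmetic $T^\alpha\cdot(\sqrt{\log(K)E}+n_{h^*}\sqrt{\log(K)/E})$ with $E=T^{1-\alpha}$, $n_{h^*}=T_{h^*}/T^\alpha$ recovers both stated terms. Your segment decomposition for $\reg{2}_c$ (charge $\delta T^\alpha$ per maximal constant-expert stretch inside a reactivation interval, so everything reduces to counting switches) and items (i)--(ii) of your switch count are also the paper's accounting.

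The genuine gap is your item (iii). First, the algorithm never resets weights at a reactivation: $w_{m,h}^{h^*}$ keeps decaying multiplicatively throughout; what changes is that $\ts(e_m)$ flips back to $1$, so $w_{m,h}=\sum_{h^*}\ts(e_m)w_{m,h}^{h^*}$ and hence $W_m$ can \emph{jump up}, and the algorithm simply forces a fresh draw from $p_m$ at those epochs (which is also needed because $w_{m,h}/w_{m-1,h}$ can exceed $1$ there, invalidating the lazy-resampling probability). Second, your claim that each such jump raises $\log W_m$ by only $O(\log K)$ is not justified and appears false for these dynamics: a sleeping meta-expert's weight decays at exactly $\eta$ per epoch (its update exponent is $+1$), while an awake one can decay as fast as $\eta^{2}$, so after $D$ epochs of sleep the returning weights can exceed the surviving ones by a factor of order $\eta^{-D}$, making the jump in $\log W$ as large as $\Theta(D(1-\eta))$; summed over $N$ reactivations a global-potential telescoping then yields an extra $O(N\sqrt{\log(K)E})$ switches, i.e.\ an $N\delta\sqrt{\log(K)T^{1+\alpha}}$ term in $\reg{2}_c$, which overshoots the claim by a factor of $N$. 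The fix (and what the paper does) is to abandon the global potential: within any epoch that is neither a deactivation nor a reactivation boundary one has $w_{m,h}\ge\eta^{2}w_{m-1,h}$, hence the resampling probability $\alpha_m=1-W_m/W_{m-1}\le 1-\eta^{2}=O(\sqrt{\log(K)/E})$ \emph{per epoch}; summing locally over at most $E$ such epochs gives $O(\sqrt{\log(K)E})$ switches independently of how $W$ jumps at boundaries, and each of the $\le N(K+1)$ boundary epochs is charged the trivial $\alpha_m\le 1$. With that replacement your proof goes through and in fact gives the slightly cleaner $O(\sqrt{\log(K)T^{1+\alpha}}+NKT^\alpha)$, with your $\log(K)T^\alpha N$ term being redundant.
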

\vspace{-0.1in}

Algorithm~\ref{alg:orc2} achieves $o(T_{h^*})$ sleeping regrets for $h^*$ with $T_{h^*}=\omega(T^{\frac{1+\alpha}{2}})$ and outperforms restarting Algorithm~\ref{alg:orc1} when $NT_{h^*}=\omega(T)$. $\sreg{1}(h^*)$ of Algorithm~\ref{alg:orc2} is $O(\sqrt{\log(K)T_{h^*}^{1+\alpha}})$ when $T_{h^*} = \Theta(T)$, which matches the results in Theorem~\ref{thm:alg}.
\begin{algorithm}[ht] \caption{$\cA_2$}\label{alg:orc2}
{\begin{algorithmic}[1]
\STATE {\bfseries Input:} $T$, $\cH$, $\alpha$ and $\eta$
\STATE Initialize $f(h) = h$ for all $h\in \cH$.
\STATE $w_{1,h}^{h^*} = \frac{1}{K}$ for all $h\in \cH$, for all $h^*\in\cH$.
\FOR{$m=1,\ldots, T^{1-\alpha}$}
	\STATE $w_{m,h} =\sum_{h^*} \ts(e_m)w_{m,h}^{h^*}$, $W_m = \sum_{h}w_{m,h}$ and $p_{m,h} = \frac{w_{m,h}}{W_m}$.
	\STATE \textbf{if} $m\in\{(t_n-1)/T^{1-\alpha}+1\}_{n=0}^N$ \textbf{then} get $h_m$ from $p_m$. \textbf{else}
	\STATE With prob. $\frac{w_{m,h_{m-1}}}{w_{m-1,h_{m-1}}}$, get $h_m=h_{m-1}$; with prob. $1-\frac{w_{m,h_{m-1}}}{w_{m-1,h_{m-1}}}$, get $h_m$ from $p_m$.
	\STATE \textbf{end if}	
	\STATE Select expert $f(h_m)$.
	\STATE Update $w_{m+1,h}^{h^*} = w_{m,h}^{h^*}\eta^{\ts(e_m)(\tloss{1}_{e_m,h}-\eta \tloss{1}_{e_m,\cA})+1}$ for all $h,h^*\in\cH$.
	\STATE For all $h$ with $f(h)\in \Delta\cH_{e_m}$, set $f(h) = h_0$, where $h_0$ is any expert in $\cH_{e_{m+1}}$.
\ENDFOR
\end{algorithmic}}
\end{algorithm}
\vspace{-0.1in}
\vspace{-0.1in}
\section{Discussion}
\vspace{-0.05in}
We introduce the study of online learning with primary and secondary losses. We find that achieving no-regret with respect to the primary loss while performing no worse than the worst expert with respect to the secondary loss is impossible in general. We propose a bounded variance assumption over experts such that we can control secondary losses by limiting the number of switching times. Therefore, we are able to bound the regret with respect to the primary loss and the regret to $cT$ with respect to the secondary loss. Our work is only a first step in this problem and there are several open questions.

One is the optimality under Assumption~\ref{asp:intv}. As aforementioned, our bounds of $\max(\reg{1}, \reg{2}_c)$ in the ``good'' scenario are not tight and we show that any algorithm only dependent on the cumulative losses will have $\reg{1} = \Omega(T^{\frac{1+\alpha}{2}})$, which indicates that the optimal algorithm cannot only depends on the cumulative losses if the optimal bound is $o(T^{\frac{1+\alpha}{2}})$. Under Assumption~\ref{asp:intv2}, the upper bound of the algorithm of limiting switching matches the lower bound. This possibly implies that limiting switching may not be the best way to make use of the information provided by Assumption~\ref{asp:intv}.

In the ``bad'' scenario with access to the oracle which reactivates experts at fixed times, our sleeping regret bounds depend not only on $T_{h^*}$ but also on $T$, which makes the bounds meaningless when $T_{h^*}$ is small. It is unclear if we can obtain optimal sleeping regrets dependent only on $T_{h^*}$ for all $h^*\in \cH$. The algorithm of {\em Adanormalhedge} by~\cite{luo2015achieving} can achieve sleeping regret of $O(\sqrt{T_{h^*}})$ without bound on the number of switching actions. However, how to achieve sleeping regret of $o(T_{h^*})$ with limited switching cost is of independent research interest.

In the ``bad'' scenario where Assumption~\ref{asp:intv} does not hold, we assume that $c$ is pre-specified and known to the oracle. Theorem~\ref{thm:notwork} show that achieving $\max(\reg{1},\reg{2}_c)=o(T)$ with $c = \max_{h} \Loss{2}_{T,h}$ is impossible without any external oracle. How to define a setting an unknown $c$ and design a reasonable oracle in this setting is an open question.

\vspace{-0.1in}
\section*{Broader Impact}
\vspace{-0.05in}
This research studies a society-constrained online decision making problem, where we take the decision receiver's objective into consideration. Therefore, in a decision making process (e.g. deciding whether to hire a job applicant, whether to approve a loan, or whether to admit a student to an honors class), the decision receiver (e.g., job applicants, loan applicants, students) could benefit from our study at the cost of increasing the loss of the decision maker (e.g., recruiters, banks, universities) a little. The consequences of failure of the system and biases in the data are not applicable.

\begin{ack}
This work was supported in part by the National Science Foundation under grant CCF-1815011.
\end{ack}

\bibliography{Major}
\newpage
\appendix
\section{Proof of Theorem~\ref{thm:lb2}}\label{apd:lb2}
\restatelb*
\begin{proof}
We construct an adversary with oblivious primary losses and adaptive secondary losses to prove the theorem. The adversary is inspired by the proof of the lower bound by~\cite{altschuler2018online}. We divide $T$ into $T^{1-\alpha}$ epochs evenly and the primary losses do not change within each epoch. Let $\lceil t \rceil_e =\min_{m:mT^\alpha\geq t} mT^\alpha$ denote the last time step of the epoch containing time step $t$. For each expert $h\in\cH$, at the beginning of each epoch, we toss a fair coin and let $\loss{1}_{t,h}=0$ if it is head and $\loss{1}_{t,h}=1$ if it is tail. It is well-known that there exists a universal constant $a$ such that $\EE{\min_{h\in\cH} Z_h} = E/2- a\sqrt{E\log(K)}$ where $Z_h\sim \Bin(E,1/2)$. Then we have
\begin{align*}
    \EE{\min_{h\in\cH}\sum_{t=1}^T\loss{1}_{t,h}}\leq \frac{T}{2}-aT^{\frac{1+\alpha}{2}}\sqrt{\log(K)}\:.
\end{align*}

For algorithm $\cA$, let $\cA_t$ denote the selected expert at time $t$. Then we construct adaptive secondary losses as follows. First, for the first $T^\alpha$ rounds, $\loss{2}_{t,h} = c+\delta $ for all $h\in\cH$. For $t\geq T^\alpha +1$,
\begin{align*}
\loss{2}_{t,h}=\begin{cases}c & \text{if } h=\cA_{t-1}=\ldots= \cA_{t-T^\alpha}\\c+\delta & \text{otherwise}\end{cases}\:.
\end{align*}
This indicates that the algorithm can obtain $\loss{2}_{t,\cA_t}=c$ only by selecting the expert she has consecutively selected in the last $T^\alpha$ rounds and that each switching leads to $\loss{2}_{t,\cA_t}= c+\delta$. Let $S$ denote the total number of switchings and $\tau_1, \ldots,\tau_S$ denote the time steps $\cA$ switches. For notation simplicity, let $\tau_{S+1}=T+1$. If $\EE{\Loss{1}_{T,\cA}}\geq {T}/{2}-aT^{\frac{1+\alpha}{2}}\sqrt{{\log(K)}}/2$, then $\EE{\reg{1}}\geq aT^{\frac{1+\alpha}{2}}\sqrt{{\log(K)}}/2$; otherwise,
\begin{align*}
\frac{T}{2} - \frac{1}{2}\EE{\sum_{s=1}^{S} \min\left(\tau_{s+1}-\tau_s,\lceil \tau_s\rceil_e+1-\tau_s\right)}\labelrel\leq{myeq:a} \EE{\Loss{1}_{T,\cA}} < \frac{T}{2} - aT^{\frac{1+\alpha}{2}}\sqrt{{\log(K)}}/2\:,
\end{align*}
where Eq.~\eqref{myeq:a} holds due to that the $s$-th switching helps to decrease the expected primary loss by at most $\min\left(\tau_{s+1}-\tau_s,\lceil \tau_s\rceil_e+1-\tau_s\right)/2$. Since the $s$-th switching increases the secondary loss to $c+\delta$ for at least $\min(\tau_{s+1}-1-\tau_{s}, T^\alpha)$ rounds, then we have
\begin{align*}
\EE{\Loss{2}_{T,\cA}} \geq& cT+\delta \EE{\sum_{s=1}^{S} \min(\tau_{s+1}-\tau_{s}, T^\alpha)}\\
\geq &cT+\delta \EE{\sum_{s=1}^{S} \min\left(\tau_{s+1}-\tau_s,\lceil \tau_s\rceil_e+1-\tau_s\right)}\\
> &cT+\delta aT^{\frac{1+\alpha}{2}}\sqrt{{\log(K)}},
\end{align*}
which indicates that $\EE{\reg{2}_c}=\Omega(T^{\frac{1+\alpha}{2}})$. Therefore, $\EE{\max\left(\reg{1}, \reg{2}_c\right)}\geq \max\left(\EE{\reg{1}},\EE{\reg{2}_c}\right)=\Omega(T^{\frac{1+\alpha}{2}})$.
\end{proof}

\section{Proof of Theorem~\ref{thm:subopt}}\label{apd:subopt}
\restatesubopt*
\begin{proof}
We divide $T$ into $T^{1-\beta}$ intervals evenly with $\beta=\frac{1+\alpha}{2}$ and construct $T^{1-\beta}+1$ worlds with $2$ experts. For computation simplicity, we let $\delta =1/2$. The adversary selects a random world $W$ at the beginning. She selects world $0$ with probability ${1}/{2}$ and world $w$ with probability ${1}/{2T^{1-\beta}}$ for all $w\in[T^{1-\beta}]$. 

In world $0$,  we design the losses of experts as shown in Table~\ref{tab:alglb}. During the $w$-th interval with $w\in[T^{1-\beta}]$ being odd, we set $(\loss{1}_{t,h_1},\loss{2}_{t,h_1},\loss{1}_{t,h_2},\loss{2}_{t,h_2}) = (0,c+\delta T^{\alpha - \beta},1,c-\delta T^{\alpha - \beta})$ for the first $T^\beta/2$ rounds and $(\loss{1}_{t,h_1},\loss{2}_{t,h_1},\loss{1}_{t,h_2},\loss{1}_{t,h_2}) = (1,c,0,c)$ for the second $T^\beta/2$ rounds. For $w$ being even, we swap the losses of the two experts, i.e., $(\loss{1}_{t,h_1},\loss{2}_{t,h_2},\loss{1}_{t,h_2},\loss{2}_{t,h_2}) = (1,c-\delta T^{\alpha - \beta},0,c+\delta T^{\alpha - \beta})$ for the first $T^\beta/2$ rounds and $(\loss{2}_{t,h_1},\loss{2}_{t,h_1},\loss{1}_{t,h_2},\loss{2}_{t,h_2}) = (0,c,1,c)$ for the second $T^\beta/2$ rounds.

The intuition of constructing world $w\in[T^{1-\beta}]$ is described as below. In world $w$, the secondary loss is the same as that in world $0$. The primary losses of each expert $h\in\cH$ in the first $w-1$ intervals are an approximately random permutation of that in world $0$. Therefore, any algorithm will attain almost the same expected primary loss (around $(w-1)T^\beta/2$) in the first $w-1$ intervals of world $w$. The primary losses during the first $T^\beta/2$ rounds in the $w$-th interval are the same as those in world $0$. Therefore, the cumulative losses from the beginning to any time $t$ in the first half of the $w$-th interval are almost the same in world $0$ and world $w$, which makes the algorithm only dependent on the cumulative losses behave nearly the same during the first half of the $w$-th interval in two worlds. For $t=(w-1/2)T^\beta+1,\ldots, T$, we set $\loss{1}_{t,h}=1$ for all $h\in\cH$, which indicates that any algorithms are unable to improve their primary loss after $t=(w-1/2)T^\beta+1$. To prove the theorem, we show that if the algorithm selects expert $h$ with loss $(1,c-\delta T^{\alpha - \beta})$ during the first half of the $w$-th interval with large fraction, then $\reg{1}$ will be large in world $w$; otherwise, $\reg{2}_c$ will be large in world $0$. 

More specifically, for the first $w-1$ intervals in world $w$, we need to make the cumulative primary losses to be $(w-1)T^\beta/2$ with high probability. Let $t' =(w-1)T^\beta-2\sqrt{(w-1)T^\beta\log(T)}$. For $t=1,\ldots, t'$, $\loss{1}_{t,h}$ are i.i.d. samples from $\Ber({1}/{2})$ for all $h\in\cH$. We denote by $E_h^{(w)}$ the event of $\abs{\sum_{t=1}^{t'} (\loss{1}_{t,h} -1/2)}\leq \sqrt{(w-1)T^\beta\log(T)}$ and denote by $E$ the event of $\cap_{h\in\cH,w\in[T^{1-\beta}]} E_h^{(w)}$. If $E_{h_1}^{(w)}\cap E_{h_2}^{(w)}$ holds, we compensate the cumulative primary losses by assigning $\loss{1}_{t,h}=1$ for $(w-1)T^\beta/2 - \sum_{t=1}^{t'} \loss{1}_{t,h}$ rounds and $\loss{1}_{t,h}=0$ for the remaining rounds during $t=t'+1,\ldots, (w-1)T^\beta$ for all $h\in\cH$ such that the cumulative primary losses in the first $w-1$ intervals for both experts are $(w-1)T^\beta/2$ ; otherwise, we set $\loss{1}_{t,h} = 1$ for all $h\in\cH$ during $t=t'+1,\ldots, (w-1)T^\beta$. Hence, if $E_{h_1}^{(w)}\cap E_{h_2}^{(w)}$, the cumulative losses $\Loss{1}_{(w-1)T^\beta,h} = (w-1)T^\beta/2$ for all $h\in\cH$. To make it clearer, the values of the secondary losses in world $w$ for an even $w$ if $E_{h_1}^{(w)}\cap E_{h_2}^{(w)}$ holds are illustrated in Table~\ref{tab:alglb2}.
 
Let $q_w=2\sum_{t=(w-1)T^\beta+1}^{(w-1/2)T^\beta}\EE{\1(\loss{1}_{t,\cA_t}=0)}/T^\beta$ denote the expected fraction of selecting the expert with losses $(0,c+\delta T^{\alpha-\beta})$ in $w$-th interval in world $0$ as well as that in world $w$ when $E$ holds. We denote by $\reg{1,w}=\Loss{1,w}_{T,\cA}- \Loss{1,w}_{T,h_0}$ and $\reg{1,w}_{t'}=\Loss{1,w}_{t',\cA}- \Loss{1,w}_{t',h_0}$ with $h_0=\argmin_{h\in\cH} \Loss{1,w}_{T,h}$ being the best expert in hindsight the regret with respect to the primary loss for all times and the regret incurred during $t=1,\ldots,t'$ in world $w$. We denote by $\reg{2,w}_c$ the regret to $cT$ with respect to the secondary loss in world $w$. Then we have  
\begin{align*}
\EEc{\reg{2,W}_c}{W=0} = \sum_{w\in[T^{1-\beta}]} \frac{\delta(2q_w-1)T^\alpha}{2}\:,
\end{align*}
and for all $w\in[T^{1-\beta}]$,
\begin{align*}
\EEc{\reg{1,W}}{W=w,E} \geq& (1-q_w)\frac{T^\beta}{2} +\EEc{\reg{1,w}_{t'}}{W=w,E}-\left((w-1)T^\beta-t'\right) \\
\geq& (1-q_w)\frac{T^\beta}{2}-2\sqrt{(w-1)T^\beta\log(T)}\: .
\end{align*}
Due to Hoeffding's inequality and union bound, we have $\PP{\neg E_{h}^{(w)}}\leq \frac{2}{T^2}$ for all $h\in\cH$ and $w\in[T^{1-\beta}]$ and $\PP{\neg E}\leq \frac{4}{T^{1+\beta}}$. Let $Q=\frac{{\sum_{w=1}^{T^{1-\beta}} q_w}}{T^{1-\beta}}$ denote the average of $q_w$ over all $w\in[T^{1-\beta}]$. By taking expectation over the adversary, we have

\begin{align}
&\EE{\max\left(\reg{1}, \reg{2}_c\right)}\nonumber\\
\geq & \PP{E}\cdot\EEc{\max\left(\reg{1}, \reg{2}_c\right)}{E}\nonumber\\
\geq &\left(1-\frac{4}{T^{1+\beta}}\right)\left(\frac{1}{2T^{1-\beta}}\sum_{w=1}^{T^{1-\beta}} \EEc{\reg{1,W}}{W=w,E}+ \frac{1}{2}\EEc{\reg{2,W}_c}{W=0,E}\right)\nonumber\\
\geq& \frac{1}{2}\left(\frac{1}{2T^{1-\beta}}\left(\sum_{w} {(1-q_w)}\frac{T^\beta}{2} -2\sum_{w=1}^{T^{1-\beta}}\sqrt{(w-1)T^\beta\log(T)}\right)+ \frac{\delta}{4}{\sum_{w=1}^{T^{1-\beta}}(2q_w-1)}T^\alpha\right) \nonumber\\
\geq& \frac{1}{8}(1-Q)T^\beta  -\sqrt{T\log(T)} + \frac{\delta}{8}(2Q-1)T^{1-\beta+\alpha}\nonumber\\
\geq& \frac{1}{16}T^{\frac{1+\alpha}{2}}-\sqrt{T\log(T)},\label{eq:alglb}
\end{align}
where Eq.~\eqref{eq:alglb} holds by setting $\beta=\frac{1+\alpha}{2}$ and $\delta =1/2$.
\end{proof}

\begin{table}[H]
\caption{The losses in world $0$.}\label{tab:alglb}
\centering
\begin{tabular}{c|c|c|c|c|c|c|c}
\toprule
\multicolumn{2}{c|}{experts\textbackslash time} &$T^\beta/2$ &$T^\beta/2$ &$T^\beta/2$ &$T^\beta/2$ &$T^\beta/2$ & $\ldots$ \\
\cmidrule{1-8}
 \multirow{2}{*}{$h_1$} &$\loss{1}$& $0$ & $1$ & $1$ & $0$ & $0$ & $\ldots$\\
\cmidrule{2-8}
 & $\loss{2}$ & $c+\delta T^{\alpha-\beta}$ & $c$ &$c-\delta T^{\alpha-\beta}$ & $c$ & $c+\delta T^{\alpha-\beta}$& $\ldots$\\
\cmidrule{1-8}
 \multirow{2}{*}{$h_2$} &$\loss{1}$& $1$ & $0$ & $0$ & $1$ & $1$ & $\ldots$\\
\cmidrule{2-8}
& $\loss{2}$ & $c-\delta T^{\alpha-\beta}$ & $c$ &$c+\delta T^{\alpha-\beta}$ & $c$ & $c-\delta T^{\alpha-\beta}$& $\ldots$\\
\bottomrule
\end{tabular}
\end{table}

\begin{table}[H]
\caption{The primary losses in world $w$ (which is even) if $E_{h_1}^{(w)}\cap E_{h_2}^{(w)}$ holds.}\label{tab:alglb2}
\centering
\begin{tabular}{c|c|c|c|c|c|c}
\toprule
\multicolumn{2}{c|}{experts\textbackslash time} &$t'$ &$(w-1)T^\beta- t'$ &$T^\beta/2$ &$T^\beta/2$ &$T-T^\beta$ \\
\cmidrule{1-7}
 {$h_1$} &$\loss{1}$& i.i.d. from $\Ber({1}/{2})$ & compensate & $1$ & $1$ & $1$\\
\cmidrule{1-7}
{$h_2$} &$\loss{1}$& i.i.d. from $\Ber({1}/{2})$ & compensate & $0$ & $1$ & $1$\\
\bottomrule
\end{tabular}
\end{table}
\section{Proof of Theorem~\ref{thm:linK}}\label{apd:linK}
\restatelinK*
\begin{proof}
The idea is to construct an example in which the best expert with respect to the primary loss is deactivated sequentially while incurring an extra $\Theta(T^\alpha)$ secondary loss. In the example, we set $\cH=[K]$. Let $T_{k} = T^{\alpha+\frac{(k-1)(1-\alpha)}{K-1}}$ for $k\in[K]$ and $T_{0}=0$. For each expert $k\in\cH$, we set $(\loss{1}_{t,k},\loss{2}_{t,k}) = (1,c)$ for $t\leq T_{{k-1}}$ and $(\loss{1}_{t,k},\loss{2}_{t,k}) = (0,c+\frac{\delta T^\alpha}{T_{k}-T_{{k-1}}})$ for $t\geq T_{{k-1}}+1$. Then expert $k$ will be deactivate at time $t=T_{k}$. For any algorithm with $\sreg{1}_{k}=o(T_{k})$ for all $k \in \cH$, expert $k$ should be selected for $T_{k}-2T_{{k-1}}-o(T_{k})$ rounds during $t=T_{{k-1}}+1,\ldots,T_{k}$. Therefore, we have $\reg{2}_c \geq \sum_{k\in[K]}\frac{\delta T^\alpha}{T_{k}-T_{{k-1}}} (T_{k}-2T_{{k-1}}-o(T_{k}))= \Omega(KT^\alpha)$.
\end{proof}

\section{Proof of Theorem~\ref{thm:ub}}\label{apd:ub}
\restateub*

\begin{proof}
Let $\tLoss{1,h^*}_h = \sum_{m=1}^{T^{1-\alpha}}\ts(e_m)\tloss{1}_{e_m,h}$ and $\tLoss{1,h^*}_\cA = \sum_{m=1}^{T^{1-\alpha}}\ts(e_m)\tloss{1}_{e_m,\cA}$ denote the cumulative pseudo primary losses of expert $h$ and algorithm $\cA$ during the time when $h^*$ is active. First, since we update $w_{m+1,h}^{h^*} = w_{m,h}^{h^*}\eta^{\ts(e_m)(\tloss{1}_{e_m,h}-\eta \tloss{1}_{e_m,\cA})+1}\leq w_{m,h}^{h^*}$ with $\eta\in[1/\sqrt{2},1]$ and experts will not be reactivated between (not including) $t_n$ and $t_{n+1}$, the probability of following the first rule on Line~7 in Algorithm~\ref{alg:orc2}, which is $\frac{w_{m+1,h_{m}}}{w_{m,h_{m}}}$, is legal. Then we show that at each epoch $m$, the probability of getting $h_m = h$ is $\PP{h_m = h}=p_{m,h}$. The proof follows Lemma~1 by~\citep{geulen2010regret}. For an reactivating epoch $m\in\{(t_n-1)/T^\alpha+1\}_{n=0}^N$, $h_m$ is drawn from $p_m$ and thus, $\PP{h_m = h}=p_{m,h}$ holds. For other epochs $m\notin\{(t_n-1)/T^\alpha+1\}_{n=0}^N$, we prove it by induction. Assume that $\PP{h_{m-1} = h}=p_{m-1,h}$, then
\begin{align*}
\PP{h_m = h} &= \PP{h_{m-1} = h}\frac{w_{m,h}}{w_{m-1,h}} + p_{m,h} \sum_{h'\in \cH} \PP{h_{m-1} = h'}\left(1-\frac{w_{m,h'}}{w_{m-1,h'}}\right)\\
& = \frac{w_{m-1,h}}{W_{m-1}}\cdot\frac{w_{m,h}}{w_{m-1,h}} +\frac{w_{m,h}}{W_m}\left(1-  \sum_{h'\in \cH}\frac{w_{m-1,h'}}{W_{m-1}}\cdot \frac{w_{m,h'}}{w_{m-1,h'}}\right)\\
& =p_{m,h}\:.
\end{align*}
To prove the upper bound on sleeping regrets, we follow Claim~12 by \cite{blum2007external} to show that $\sum_{h,h^*} w_{m,h}^{h^*}\leq K \eta^{m-1}$ for all $m\in [T^{1-\alpha}]$.

First, we have
\begin{align}
W_m \tloss{1}_{e_m,\cA} = W_m \sum_{h\in\cH}p_{m,h} \tloss{1}_{e_m,h} = \sum_{h\in\cH}w_{m,h} \tloss{1}_{e_m,h}= \sum_{h\in\cH}\sum_{h^*\in\cH}\ts(e_m)w_{m,h}^{h^*} \tloss{1}_{e_m,h}\:. \label{eq:intm}
\end{align}
Then according to the definition of $w_{m,h}^{h^*}$, we have
\begin{align*}
&\sum_{h\in\cH,h^*\in\cH} w_{m+1,h}^{h^*}\\=&\sum_{h\in\cH,h^*\in\cH} w_{m,h}^{h^*}\eta^{\ts(e_m)(\tloss{1}_{e_m,h}-\eta \tloss{1}_{e_m,\cA})+1}\\
\leq& \eta\left(\sum_{h\in\cH,h^*\in\cH} w_{m,h}^{h^*} \left(1-(1-\eta)\ts(e_m)\tloss{1}_{e_m,h}\right)\left(1+(1-\eta)\ts(e_m)\tloss{1}_{e_m,\cA}\right)\right)\\
\leq& \eta\left(\sum_{h\in\cH,h^*\in\cH} w_{m,h}^{h^*} - (1-\eta)\left(\sum_{h\in\cH,h^*\in\cH} w_{m,h}^{h^*}\ts(e_m)\tloss{1}_{e_m,h} - W_m\tloss{1}_{e_m,\cA}\right)\right)\\\nonumber
=&\eta\sum_{h\in\cH,h^*\in\cH} w_{m,h}^{h^*}\:,\label{eq:sr}
\end{align*}
where the last inequality adopts Eq.~\eqref{eq:intm}. Combined with $w_{1,h}^{h^*} = \frac{1}{K}$ for all $h\in\cH,h^*\in\cH$, we have $\sum_{h,h^*} w_{m+1,h}^{h^*}\leq K \eta^{m}$. Since $w_{m+1,h}^{h^*} = w_{1,h}^{h^*}\eta^{\sum_{i=1}^{m} \ts(e_i)\tloss{1}_{e_i,h}-\eta\sum_{i=1}^{m} \ts(e_i)\tloss{1}_{e_i,\cA}+m} \leq K \eta^{m}$, we have
\begin{align*}
    \tLoss{1,h^*}_\cA -\tLoss{1,h^*}_h\leq \frac{(1-\eta)\tLoss{1,h^*}_h +\frac{2\log(K)}{\log(1/\eta)}}{\eta}\,.
\end{align*}
By setting $\eta = 1-\sqrt{2\log(K)/T^{1-\alpha}}$, we have $\sreg{1}(h^*) \leq 2\sqrt{\log(K)T^{1+\alpha}} +2T_{h^*}\sqrt{\log(K)T^{\alpha-1}}$.

To derive $\reg{2}_c$, we bound the number of switching times. We denote by $S_n$ the number of epochs in which some experts are deactivated during $(t_n-1)/T^\alpha+1 < m< (t_{n+1}-1)/T^\alpha+1$ and by $\tau_1,\ldots,\tau_{S_n}$ the deactivating epochs, i.e., $\Delta \cH_{\tau_i}\neq \emptyset$ for $i\in [S_n]$. We denote by $\alpha_m$ the probability of following the second rule at line~7 in Algorithm~\ref{alg:orc2}, which is getting $h_m$ from $p_m$. Then we have
\begin{align*}
\alpha_m = \sum_{h\in\cH} \PP{h_{m-1} = h}\left(1-\frac{w_{m,h}}{w_{m-1,h}}\right)=\sum_{h\in\cH} \frac{w_{m-1,h}}{W_{m-1}}\left(1-\frac{w_{m,h}}{w_{m-1,h}}\right) = \frac{W_{m-1}-W_{m}}{W_{m-1}}\:.
\end{align*}
Since $W_{\tau_{i+1}}/W_{\tau_i+1}\geq \eta^{2(\tau_{i+1}-\tau_i-1)}$, we have
\begin{align*}
    \sum_{m=\tau_i+1}^{\tau_{i+1}} \alpha_m &\leq 1-\sum_{m=\tau_i+2}^{\tau_{i+1}} \log(1-\alpha_m) = 1-\sum_{m=\tau_i+2}^{\tau_{i+1}} \log\left(\frac{W_m}{W_{m-1}}\right) = 1+\log\left(\frac{W_{\tau_{i+1}}}{W_{\tau_i+1}}\right)\\
    &\leq 1+2\sqrt{2}(\tau_{i+1}-\tau_i-1)(1-\eta) = 1+4(\tau_{i+1}-\tau_i-1)\sqrt{\log(K)/T^{1-\alpha}}\,.
\end{align*}
Therefore, during time $(t_n-1)/T^\alpha \leq m< (t_{n+1}-1)/T^\alpha$, the algorithm will switch at most $K+4(t_{n+1}-t_n)\sqrt{\log(K)/T^{1-\alpha}}+1$ times in expectation,
which results in $\reg{2}_c \leq 4\delta\sqrt{\log(K)T^{1+\alpha}}+\delta N(K+1)T^{\alpha} = O(\sqrt{\log(K)T^{1+\alpha}}+NKT^{\alpha})$
\end{proof}

\end{document}